\documentclass[twoside]{article}

\usepackage[preprint]{aistats2027}

\usepackage[round]{natbib}

\usepackage{xcolor}
\usepackage{wrapfig}
\usepackage{caption}
\usepackage{booktabs}
\usepackage{bm}
\usepackage{comment}
\usepackage{float}
\usepackage{graphicx}
\usepackage{amsfonts}
\usepackage{xspace}
\usepackage[ruled,vlined]{algorithm2e}
\usepackage{amssymb}
\usepackage{amsthm}
\usepackage{thmtools}
\usepackage{algorithmic}
\usepackage{hyperref}
\usepackage{cleveref}
\usepackage[T1]{fontenc}
\ifdefined\XeTeXversion
  \usepackage{newunicodechar}
  \newunicodechar{’}{'}
  \newunicodechar{‘}{`}
  \newunicodechar{–}{--}
\fi
\providecommand{\SO}{\mathrm{SO}}

\newcommand{\permn}{\mathfrak S_n}
\newcommand{\R}{\mathbb{R}}
\newcommand{\N}{\mathbb{N}}
\newcommand{\eb}{\mathbf{e}}
\newcommand{\xb}{\mathbf{x}}
\newcommand{\ab}{\mathbf{a}}
\newcommand{\ub}{\mathbf{u}}

\providecommand{\coloneqq}{\mathrel{:}=}
\newcommand{\our}{\textsc{LexiCan}\xspace}

\newtheorem{theorem}{Theorem}
\newtheorem{lemma}{Lemma}

\begin{document}
\runningtitle{LexiCan: Lexicographic Canonicalization of Point Clouds}
\runningauthor{Sanchez, Mihajlovic, Valeau, Beni\'c, Raj, Sarangi, Hordan}

\twocolumn[

\aistatstitle{\our: Lexicographic Canonicalization of Point Clouds for Machine Learning Applications}

\aistatsauthor{Daniel Sanchez\textsuperscript{*} \And Djordje Mihajlovic\textsuperscript{*} \And  Federica Valeau\textsuperscript{*}}

\aistatsaddress{Imperial College London \And  University of Edinburgh \And AMLab - University of Amsterdam} 

\aistatsauthor{Luka Beni\'c\textsuperscript{*} \And Raghuwansh Raj\textsuperscript{*} \And  Soutrik Sarangi\textsuperscript{*}}

\aistatsaddress{
Ru\dj{}er Bo\v{s}kovi\'c Institute
\\
University of Zagreb
\And
University of Luxembourg
\And
Microsoft Research India
}
\vspace{-4mm}
\aistatsauthor{Snir Hordan}

\aistatsaddress{Technion - Israel Institute of Technology}
]

\begingroup
\renewcommand{\thefootnote}{\fnsymbol{footnote}}
\footnotetext[1]{Equal contribution. These authors are listed alphabetically by first name.}
\endgroup

\begin{abstract}
Canonicalization resolves the rotational and permutation ambiguity of 3D point clouds by selecting a unique orbit representative, maintaining their geometric information while allowing standard architectures, such as Transformers, to operate on them. Existing canonicalizations, however, exhibit discontinuity jumps on molecular configurations and non-trivial invariance errors in popular molecular datasets, which lead to ambiguity in their representation. To bridge this gap, we introduce \our{}, a canonicalization that is provably invariant under either $\mathrm{O}(3)$ or $\SO(3)$, empirically invariant to rotations to within $\sim\!10^{-6}$ \AA{} on the QM9 and QM-sym molecular datasets, and mitigates discontinuity jumps on QM-sym, while running orders of magnitude faster than an existing canonicalization. Finally, non-invariant models perform better on QM9 property prediction and ModelNet40 shape classification with \our{} than with that canonicalization.
\end{abstract}

\section{Introduction}

\label{sec:intro}
Three-dimensional point clouds are an efficient and widely-used representation of geometric data in molecular modelling, computer vision, robotics, geometric processing, and scientific computing \citep{thomas2018tensor,gebauer2019symmetry,batzner2022e3,pomerleau2015review}. Unlike images expressed on a fixed pixel grid, point clouds generally do not come with a preferred coordinate system. The same object may be observed at arbitrary positions and orientations, whilst its geometric and physical properties remain unchanged. A reliable learning system must account for these transformations: invariant outputs, such as object classes or molecular energies, should not depend on the chosen reference frame, whereas geometric outputs, such as forces or trajectories, should transform accordingly. Incorporating these symmetries into the learning pipeline can substantially improve sample efficiency, robustness, and generalization by preventing a model from learning spurious dependencies only based on pose \citep{hinton2011transforming,cohen2017steerable,romero2020wavelet}.

Nevertheless, constructing a reliable canonical frame is difficult. Classical alignment techniques often rely on Principal Component Analysis (PCA), using the eigenvectors of the point cloud’s covariance matrix as coordinate axes \citep{vranic2001tools,chaouch2008novel,bellekens2014survey}. Yet, such frames are ambiguous under sign changes and become non-unique when covariance eigenvalues coincide. Frame Averaging methods retain multiple possible frames but do not fully resolve the degeneracies caused by repeated eigenvalues and can increase computational costs \citep{puny2022frame,duval2023faenet}. Learning-based canonicalization offers greater flexibility but introduces a different set of difficulties. A learned frame may depend on the symmetries represented in its training distribution, may generalize poorly to previously unseen stabilizer types, and can introduce errors into the canonicalized coordinates \citep{winter2022unsupervised}.

Exact canonicalizations of point clouds under the orthogonal $\mathrm{O}(3)$ and special orthogonal $\SO(3)$ groups exist. But, as recently shown, none of them can be made continuous \citep{dymlawrence2024impossibility, baker2024explicit}. To address this, \citet{baker2024explicit} introduce Asymmetric Unit Normalization (ASUN), an explicit construction that detects the point group of a point cloud to extract a component of the point clouds called an asymmetric unit and uses this component to align the point cloud to a canonical pose. In practice, however, we find that ASUN's implementation assigns different orientations to rotationally equivalent point clouds, beyond numerical error, and fails to canonicalize some symmetric configurations and is unstable on molecular geometries in the QM9 dataset \cite{ramakrishnan2014quantum}.

We address this theory-to-practice gap by introducing \textbf{\our}, a $3$D point cloud canonicalization based on lexicographic ordering, which is both invariant to within $\sim\!10^{-6}$ \AA{} on molecular data, handles symmetric point clouds and improved stability on molecular configurations.
Specifically, our main contributions are the following:

\begingroup
\setlength{\itemsep}{1pt}
\setlength{\parskip}{0pt}
\setlength{\topsep}{2pt}
\setlength{\partopsep}{0pt}
\begin{enumerate}
\item \textbf{Provable canonicalization.} We construct a provable canonicalization for labeled point clouds under groups $G_{3}^{\SO}=\permn\times\SO(3)$ and $G_{3}^{\mathrm{O}}=\permn\times\mathrm{O}(3)$, where $\permn$ is the symmetric group on $n$ elements, using circular words and least cyclic shifts, proving its correctness in Lemma~\ref{lem:circular-canonicalization}, Theorem~\ref{thm:three-dimensional-canonicalization}, and Theorem~\ref{thm:algorithm-correctness} (see Appendix~\ref{app:proofs}).

\item \textbf{Efficient implementation.} We provide an algorithm with a worst-case runtime complexity of $O(n^2\log n)$ (see Theorem~\ref{thm:algorithm-runtime} in Appendix~\ref{app:proofs}). However, empirically, across the evaluated input sizes, \our is faster than ASUN by a factor of $\sim\!\mathbf{130}$ to $\sim\!\mathbf{5{,}400}$ on ModelNet40 \citep{modelnet} and of $\sim\!\mathbf{6}$ to $\sim\!\mathbf{22}$ on QM9, growing with the number of points.

\item\textbf{Almost-everywhere local regularity.}
We prove that \our's exact construction and its tolerance-based
construction are locally Lipschitz
almost everywhere in the space of centered point clouds with fixed
labels (Theorem~\ref{thm:robustness}). This establishes generic local regularity and we assess stability on molecular configurations specifically
through our perturbation experiments.

\item \textbf{High-precision invariance.} On the QM9 dataset, \our achieves mean invariance errors of $\sim\!10^{-6}$ \AA{}, four to five orders of magnitude lower than those of PCA- and ASUN-based canonicalizations ($0.89$ and $0.04$ \AA{} on molecules of rank $3$) (see Section~\ref{sec:experiments}).

\item \textbf{Well-posedness at molecular geometries.} We show that while PCA and ASUN maintain a non-zero probability of an $\varepsilon$-perturbation inducing an $O(1)$ change in canonical form as $\varepsilon\to0$, for \our this probability is $0$ at $\varepsilon=10^{-8}$ (no jump in $64{,}000$ trials) on the QM-sym \citep{liang2019qmsym} dataset, and \our shows no jumps on any of the $82$ pilot geometries (see Table~\ref{tab:disc-methods}).

\end{enumerate}
\endgroup

\section{Related Work}
\label{sec:related}
\textbf{Point cloud canonicalization.} Classical methods construct representatives using deterministic geometric rules, such as PCA-based alignment, eigenvector canonicalization, or lexicographic ordering. Frame Averaging constructs a set of PCA-derived frames and averages predictions over them \citep{puny2022frame}, while ASUN \citep{baker2024explicit} detects point-group symmetries, extracts an asymmetric unit, and aligns it with a canonical frame. Yet, PCA has sign ambiguities of its eigenvectors (and eigenbasis ambiguities on self-symmetric point clouds), leading to a mapping that is not well-defined, while ASUN has non-trivial invariance errors on QM9 and discontinuity jumps on molecular structures, see \Cref{sec:experiments}. More related work is presented in Appendix~\ref{apdx:add-rlt-wrk}.

\section{Preliminaries}
\label{sec:setup}
We consider point clouds with additional point-wise labels as our data object for canonicalization. Formally, let $n\geq1$ be a positive integer, and let $\Lambda$ be a totally ordered label set (for example, atomic symbols in the case of molecules) and let $$P=\{(\xb_i,a_i)\}_{i=1}^n\in(\mathbb R^d\times\Lambda)^n$$ be a labeled point cloud. Here $\xb_i\in\mathbb R^d$ is a point and $a_i\in\Lambda$ its label.

Point clouds have natural symmetry to node reordering and rotations of the points, both proper and improper. These actions are formalized as the action of the symmetric group on $n$ elements, denoted by $\permn$, and the action of $\SO(d)$ and $\mathrm{O}(d)$ on each of the points, respectively. 

We denote the joint action of permutations and rotations by $G_{d}^{\SO}=\permn\times\SO(d)$ and $G_{d}^{\mathrm{O}}= \permn\times\mathrm{O}(d)$, where the first accounts for permutations and proper rotations and the second also includes reflections. The action of a group action $g$ on a point cloud $P$ is
\begin{equation}
\textstyle
(\pi,Q)\cdot P = \left(\bigl(Q\xb_{\pi^{-1}(i)},a_{\pi^{-1}(i)}\bigr)\right)_{i=1}^n
\label{eq:group-action}
\end{equation}

where $g\coloneqq(\pi,Q)\in G_{d}^{\kappa}$, where $d$ is the dimension and $\kappa$ stands for $\SO(d)$ or $\mathrm{O}(d)$.

The labels are unchanged by spatial transformations and are permuted together with their points. For point clouds having multiple attributes or labels, we consider the set of labels as $\mathcal A=\mathcal A_1\times\cdots\times\mathcal A_q$, where each $\mathcal A_r$ is totally ordered. This gives a natural lexicographic ordering of the entire label set.

\begin{figure*}[hbt!]
    \centering
    \includegraphics[width=\textwidth]{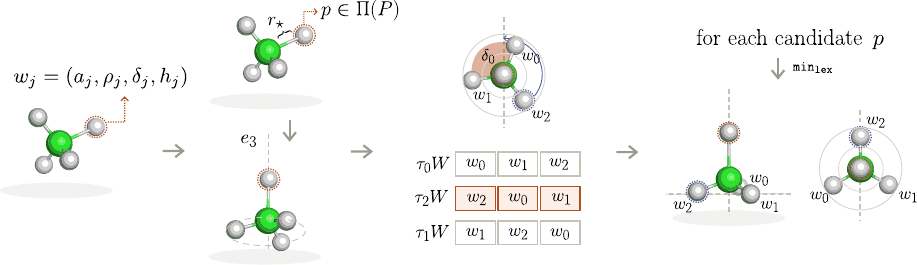}
    \caption{Visualization of \our algorithm. \textbf{(1)} Each point of the labeled and centered point cloud described by a string of values $w_{j}$. \textbf{(2)} A candidate set $\Pi(P)$ for the axis of projection is chosen based on radius, labels (here, one point at distance $r_{\star}$). The projection axis is then oriented with $e_{3}$. \textbf{(3)} 2D canonicalization is performed on the projected point cloud, where the lexicographically minimal starting point (here $w_{2}$) is rotated around axis and aligned with $+y$. \textbf{(4)} This is performed across all choices of $p\in \Pi(P)$ and the lexicographically minimal serialization yields the canonicalization.}
    \label{fig:canonicalization-algorithm}
\end{figure*}

Under this notation, a point cloud is written as:
\begin{equation}
\textstyle
    P=\bigl\{(\xb_i,\ab_i)\bigr\}_{i=1}^n,\,\ab_i\coloneqq(a_i^1,\ldots,a_i^q)\in\mathcal A~.
\label{eq:ptcloud_label}
\end{equation}
We center $P$ by $\xb_i\mapsto \xb_i - \mathbf{c}(P)$, where $\mathbf{c}(P)$ is the centroid of the point cloud, which is a translation canonicalization of the point cloud. We assume $P$ is centered henceforth, giving translational invariance, therefore we focus on canonicalizing point clouds with respect to the groups $G^{\SO}_d$ and $G^{\mathrm{O}}_d$.

We now formally define the term canonicalization for point clouds and use it throughout the text for the groups $G_{d}^{\SO}$ and $G_{d}^{\mathrm O}$, with $d=2$ and $d=3$.

\textbf{Definition 1 (Canonicalization for Point Clouds)}
\textit{A point cloud canonicalization for the group $G_{d}^{\kappa}$, where $d$ is the dimension and $\kappa$ stands for the groups $\SO$ or $\mathrm{O}$, is a function $\mathcal C_{d}^{\kappa}(\cdot) : (\mathbb R^d\times\Lambda)^n \to (\mathbb R^d\times\Lambda)^n$ on centered, labeled point clouds that satisfies, for every group element $g\in G_{d}^{\kappa}$,}
\begin{equation}
\textstyle
\label{def:canonicalization}
    \begin{aligned}
        & \hspace{1.4cm} \mathcal C_{d}^{\kappa}(g\cdot P)=\mathcal C_{d}^{\kappa}(P), \\
        & \mathcal C_{d}^{\kappa}(P)\in G_{d}^{\kappa}\cdot P \coloneqq \{g'\cdot P:g'\in G_{d}^{\kappa}\}.
    \end{aligned}
\end{equation}
That is, it is an invariant mapping that chooses a $G_{d}^{\kappa}$-orbit representative for each point cloud.

\section{\our: Lexicographic Canonicalization of Point Clouds}
\label{sec:canonicalization} 
In this section, we present \our and prove it satisfies the properties of a canonicalization of point clouds. We first build a canonicalization for 2D point clouds. To extend it to 3D, we select an invariant axis of rotation and then apply the 2D recipe to the projected planar cloud. \our handles node features and is also valid in their absence. An overview of the \our workflow is shown in Figure~\ref{fig:canonicalization-algorithm}.

\textbf{Labels and their ranks.} Each point carries one label $\ell_i\in\Lambda$ from a totally ordered set (the atomic label for molecules and unlabeled clouds use a single label). \our compares
labels through their \emph{rarest-first rank}: with $c(\lambda)=|\{i:\ell_i=\lambda\}|$
the number of points carrying label $\lambda$, the labels present in $P$ are sorted by
increasing $c(\lambda)$, ties broken by the order of $\Lambda$, and
$a_i\in\{0,1,\ldots\}$ is the position of $\ell_i$ in that list (the rarest label has
rank $0$). Ranks are unchanged by spatial transformations and permuted together with the points. In general, we denote node labels by $\ell_i$, but in the case they are vector valued, they are denoted by $\ab_i$.

\subsection{Canonicalization of a Labeled 2D Point Cloud}
\label{subsec:circular-canonicalization}
We draw inspiration from classical symmetry detection algorithms \citep{wolter1985optimal} to represent 2D point clouds as a discrete one-dimensional torus with rotation-invariant multi-dimensional continuous and discrete features, intuitively resembling a circular string. We then choose the lexicographically minimal \textit{starting point} and align it with the $y$ axis. Additionally, we extend this approach to handle reflections.

Any point cloud in 2D can be written as $P\in \mathbb{R}^{n \times 2}$, where each $P_i\in\R^2$ is the spatial coordinate of a point. Writing the position vector using polar coordinates and including the node labels (as in Eq.~\eqref{eq:ptcloud_label}), we can write a planar point cloud as: $P=\bigl\{(\rho_i,\phi_i,\ab_i)\bigr\}_{i=1}^n$, where $(\rho_i,\phi_i)$ are polar coordinates and $\ab_i$ are the label tuples. For the non-origin points, we create a canonical description through the following steps:

\textbf{(1)} We sort the points by increasing angle $\phi_i$. Points with the same angle are ordered by $(a_i,\rho_i)$. We denote the resulting sorted indices by $i_0,\ldots,i_{m-1}$. 

\textbf{(2)} For each point, let $\delta_j$ be the counterclockwise angular gap from $i_j$ to $i_{j+1}$, including the wrap-around from $i_{m-1}$ to $i_0$. 

\textbf{(3)}  We record each point by its radius, angular difference, and labels. This gives the circular word $W(P)=\left((a_{i_j},\rho_{i_j},\delta_j)\right)_{j=0}^{m-1}$.

\textbf{(4)} We read this word starting from every possible position and retain the lexicographically smallest reading: $W_{\min}(P)=\min_{0\leq s<m}\tau_s W(P)$, where $\tau_s$ denotes a cyclic shift by $s$ positions. The first point of $W_{\min}(P)$ therefore provides a canonical reference.

\noindent$\mathbf{SO(2)}$ \textbf{canonicalization.}
If all points lie at the origin, return them ordered by theiroriginal labels. Otherwise, rotate the cloud so that the first point of $W_{\min}(P)$ lies on the positive $Y$ axis.
If $\phi_\star$ is its angle, define
$\widehat\phi_i=\phi_i-\phi_\star+\pi/2\pmod{2\pi}$
for every non-origin point, and write $\widehat u_i=\rho_i(\cos\widehat\phi_i,\sin\widehat\phi_i)$ if $\rho_i > 0$ and $(0,0)$ otherwise. Let $\sigma$ order all point indices lexicographically by
$(a_i,\widehat u_{i,1},\widehat u_{i,2})$. Then:
\begin{equation}
\label{eq:circular-canonicalization-so}
\mathcal C_2^{\SO}(P)
=
\left(
\bigl(\widehat u_{\sigma(i)},\ab_{\sigma(i)}\bigr)
\right)_{i=1}^n.
\end{equation}
Points at the origin
are excluded from reference selection but included in the
final ordering.

\noindent $\mathbf{O(2)}$ \textbf{canonicalization.} To additionally remove reflections for the point cloud $P$, we do the following: 

\textbf{(1)} compute the $\SO(2)$ canonical form $\mathcal C_2^{\SO}(P)$; 

\textbf{(2)} reflect $P$ across the $X$ axis and compute the same canonical form;

\textbf{(3)} Call the reflected point cloud $\overline P=\bigl\{(\rho_i,-\phi_i\!\!\pmod{2\pi},\ab_i)\bigr\}_{i=1}^n.$ For $m>0$, let $s(P)$ be a starting index producing $W_{\min}(P)$
and write $j_t=i_{(s(P)+t)\bmod m}$.
After choosing this start, define
\begin{align*}
\textstyle
    B(P)\coloneqq
\left((a_{j_t})_{t=0}^{m-1},
(\delta_{(s(P)+t)\bmod m})_{t=0}^{m-1},
(\rho_{j_t})_{t=0}^{m-1}
\right).
\end{align*}

This comparison examines the complete label-rank sequence first,
then the complete angular-gap sequence, and finally the complete
radial sequence. Define $B(\overline P)$ by the same construction
applied to the reflected cloud. If $m=0$, both keys are empty. Define:
\begin{equation}
\textstyle
\label{eq:circular-canonicalization-o}
\mathcal C_2^{\mathrm O}(P)=
\begin{cases}
\mathcal C_2^{\SO}(P),
& B(P)\leq_{\mathrm{lex}}B(\overline P),\\
\mathcal C_2^{\SO}(\overline P),
& B(\overline P)<_{\mathrm{lex}}B(P).
\end{cases}
\end{equation}
If the two keys agree, the two branches produce the same
exact labeled canonical form. For projected 3D clouds, append the complete height sequence
$(h_{j_t})_{t=0}^{m-1}$ as the final component of $B(P)$.

\subsection{Canonicalization of a Labeled 3D Point Cloud}
We extend the 2D canonicalization to a 3D one via the insight that once we fix an anchor axis in 3D, the projections of the points onto its orthogonal hyperplane form a 2D point cloud. This reduces the problem of 3D canonicalization to a 2D one, which we have solved in Section~\ref{subsec:circular-canonicalization}. Using lexicographic sorting and only invariant features, we can choose this axis invariantly to permutations, rotations, and translations, thus allowing us to obtain a 3D canonicalization.

Let $P$ be a centered labeled point cloud in $\R^3$. We reduce its canonicalization to the 2D construction above via the following steps:

\textbf{(1)} We choose a candidate set for the axis based on invariant features: the largest radius and, among those points, the rarest label as follows:
\begin{equation}
\textstyle
\label{eq:candidate-poles}
\begin{split}
    r_\star&=\max_i\|\xb_i\|,\\
    \Pi(P)&=\left\{p:\|\xb_p\|=r_\star,\ a_p=\min_{\|\xb_i\|=r_\star}a_i\right\}.
\end{split}
\end{equation}
\textbf{(2)} For every $p\in\Pi(P)$, choose a rotation $A_p$ that places $\xb_p$ on the positive $Z$ axis. We can write $A_p\xb_i=(\ub_i^{(p)},h_i^{(p)})$, where $\ub_i^{(p)}\in\R^2$ is the planar projection and $h_i^{(p)}$ is the height.  The planar cloud keeps the height as an extra label:
$P_p=\{(\ub_i^{(p)},\ab_i,h_i^{(p)})\}_{i=1}^n$, and its circular word is
$W(P_p)=\{(\ab_{i_j},\rho_{i_j},\delta_j,h_{i_j})\}_{j=0}^{m-1}$, with points of equal angle
ordered by $(\ab_i,\rho_i,h_i)$; that is, the label rank first and the height last.

\textbf{(3)} Once a candidate axis is fixed, we canonicalize the projected planar cloud using the 2D canonicalizations defined in the preceding subsection. For every possible candidate $p\in\Pi(P)$, we perform this \textit{project and canonicalize} procedure and select the output with the minimum lexicographic order.

\noindent$\mathbf{SO(3)}$ \textbf{canonicalization.}
For each $p\in\Pi(P)$, apply $\mathcal C_2^{\SO}$ to $P_p$,
including height and retaining the original labels. Restore each height $h$ as the third coordinate:
$(\widehat u,\ab,h)\mapsto
((\widehat u_1,\widehat u_2,h),\ab)$.
Order the resulting labeled 3D points lexicographically by
$(a,x,y,z)$, where $a$ is the  rarity rank,
and denote this ordered candidate by $Y_p^{\SO}$. We define the canonicalization as:
\begin{equation}
\textstyle
\label{eq:canonicalization-so3}
\mathcal C_3^{\SO}(P) = \min_{p\in\Pi(P)}Y_p^{\SO},
\end{equation}
\noindent$\mathbf{O(3)}$ \textbf{canonicalization.} We perform the same construction as before $\forall p\in\Pi(P)$ using $\mathcal C_2^{\mathrm O}(\cdot)$ instead of $\mathcal C_2^{\SO}(\cdot)$ and call the resulting 3D cloud $Y_p^{\mathrm O}$. We define the canonicalization as:
\begin{equation}
\textstyle
\label{eq:canonicalization-o3}
\mathcal C_3^{\mathrm O}(P)=\min_{p\in\Pi(P)}Y_p^{\mathrm O}.
\end{equation}
By comparing the reflected planar projection, we remove 3D reflections in this case. In Theorem~\ref{thm:three-dimensional-canonicalization}, we prove that $\mathcal C_3^{\SO}(\cdot)$ and $\mathcal C_3^{\mathrm O}(\cdot)$ are canonical representations for $G_3^{\SO}$ and $G_3^{\mathrm{O}}$, respectively. The theorem statement and formal proofs are in Appendix~\ref{app:proofs}.

\subsection{Algorithms for Computing the Canonicalizations}
\label{subse:algorithms}
\begin{algorithm}
\caption{\our}
\label{alg:canonicalization-overview}
\KwIn{Centered labeled cloud $P$}
\If{all points are at the origin}{
    \KwRet{points ordered by their labels}
}
Construct the admissible pole set $\Pi(P)$
using Eq.~\eqref{eq:candidate-poles}\\
\For{each $p\in\Pi(P)$}{
    Align $p$ with $\eb_3$ and construct $P_p$
    as in Algorithm~\ref{alg:three-dimensional-canonicalization}\\
    Apply the planar canonicalization to $P_p$,
    using Algorithm~\ref{alg:linear-cyclic-shift}
    for the least cyclic shift\\
    Restore heights and order the lifted records by
    $(a,x,y,z)$ to obtain $Y_p^\kappa$\\
    \If{this is the first candidate}{
        $D_{\mathrm{best}} \gets D_p$
    }
    \ElseIf{$D_p <_{\mathrm{lex}} D_{\mathrm{best}}$}{
        $D_{\mathrm{best}} \gets D_p$
    }
}
\KwRet{$D_{\mathrm{best}}$}
\end{algorithm}
The \our algorithm (see Algorithm~\ref{alg:canonicalization-overview}), that computes the canonicalizations $\mathcal C_3^{\SO}(\cdot)$ and $\mathcal C_3^{\mathrm O}(\cdot)$, is divided into two subroutines: Algorithm~\ref{alg:linear-cyclic-shift} for the canonicalization of a 2D point cloud and Algorithm~\ref{alg:three-dimensional-canonicalization} that computes the required 3D canonicalization by repeatedly using Algorithm~\ref{alg:linear-cyclic-shift} after selecting an axis of projection. 

Specifically, we first select an axis of projection based on frame invariant features like radius and label and project the 3D point cloud into a 2D one from the chosen axis. On this 2D projection, we apply Algorithm~\ref{alg:linear-cyclic-shift}, which is a two-pointer based approach inspired by Booth's algorithm \citep{Booth1980} to find the lexicographically minimum representation of a circular string. In case of ties in choosing the axis of projection, we perform this projection followed by Algorithm~\ref{alg:linear-cyclic-shift} for all possible axis choices and then select the one that produces the lexicographically smallest output. More details on the algorithm are given in Appendix~\ref{app:algorithms}.

In Theorem~\ref{thm:algorithm-correctness}, we show that \our computes the canonicalizations $\mathcal C_3^\kappa(\cdot)$, where $\kappa$ is $\SO$ or $\mathrm{O}$, and analyze the complexity in Theorem~\ref{thm:algorithm-runtime} (Appendix~\ref{app:proofs}). \our has a worst case runtime of $O(n^2\log n)$ since the number of ties $k$ for the axis choice is $O(n)$ in the worst case. But in most practical settings, $k$ grows sublinear in $n$. For datasets such as QM9 or ModelNet40, we have $k\approx O(1)$ and the runtime is log-linear in $n$, as shown in Table~\ref{tab:runtime-real-data}.

\subsection{Robustness of the \our Algorithm}
The canonicalizations $\mathcal C_3^\kappa(\cdot)$ with $\kappa\in\{\SO,\mathrm{O}\}$, are defined using exact comparison, but for their numerical implementations \our uses a fixed relative tolerance $\eta>0$: two lengths differing by at most $\eta R$, where $R$ is the largest distance of a point from the centroid, and two angles differing by at most $\eta$ radians may be treated as tied. We write $\widehat{\mathcal C}_{3,\eta}^{\kappa}(\cdot)$ for the implemented numerical map with tolerance $\eta$.

It is known that the $\mathcal C_3^{\kappa}(\cdot)$ canonicalizations cannot be globally continuous \citep{dymlawrence2024impossibility}. But we show that for all $\eta\geq 0$, the map $\widehat{\mathcal C}_{3,\eta}^{\kappa}(\cdot)$ is locally Lipschitz almost everywhere, establishing the robustness of the exact maps $\mathcal C_{3}^\kappa(\cdot)$ as well as that of the numerical approximations $\widehat{\mathcal C}_{3,\eta}^{\kappa}(\cdot)$.

In the following Theorem~\ref{thm:robustness} (proof in Appendix~\ref{app:proofs}) we show that for almost every point cloud $P_0$, the canonical map $\widehat{\mathcal C}_{3,\eta}^{\kappa}(\cdot)$ is Lipschitz in a neighborhood of $P_0$, thus proving almost everywhere continuity of the implemented map accounting for numerical approximations. This is due to the fact that sorting is piecewise continuous, although discontinuous when tie-breaking occurs, and the points of discontinuity form a Lebesgue-measure zero set, which then implies that \our satisfies this as well.  It should be noted that the analysis does not account for floating-point roundoff or machine-range restrictions.

\begin{restatable}[\textbf{Locally Lipschitz almost everywhere property of \our}]{theorem}{thmrobustness}
    \label{thm:robustness}
    Fix the number of points, their labels, the group $G_{3}^{\kappa}$ with $\kappa\in{\{\SO,\mathrm O}\}$, and a value $\eta\geq0$. 
    
    For almost every centered point cloud $P_0$, there exist $\delta_\eta(P_0)>0$ and $L_\eta(P_0)<\infty$ such that $\forall\,P, P'\in\mathcal{B}_\infty(P_0, \delta_\eta(P_0))$ it holds that $$\mathrm{EMD}\left(\widehat{\mathcal C}_{3,\eta}^\kappa(P), \widehat{\mathcal C}_{3,\eta}^\kappa(P')\right) \leq L_\eta(P_0)d_\infty\left(P, P'\right)$$ Consequently, for every fixed $\eta\geq0$, the map $\widehat{\mathcal C}_{3,\eta}^{\kappa}(\cdot)$ is continuous almost everywhere.
\end{restatable}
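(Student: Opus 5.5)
The plan is to exhibit an open set $U_\eta$ of full measure in the space $V$ of centered clouds with fixed labels. On each connected piece of $U_\eta$, every discrete decision made by $\widehat{\mathcal C}_{3,\eta}^\kappa$ should be locally constant. Once those decisions are frozen, the output is a fixed smooth function of $P$. The decisions are the candidate set $\Pi(P)$, the angular ordering of projected points, the least cyclic shift, the reflection branch, the minimizing pole and the final lexicographic sort. Concretely, we write every comparison performed by the algorithm as a comparison between two real-analytic, or at least semi-algebraic, functions $f(P)$ and $g(P)$ defined on an open dense subset of $V$ (away from $\xb_p=\pm r_\star\eb_3$ singularities of $A_p$). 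Examples are $\|\xb_i\|$ versus $\|\xb_j\|$, polar angles, gaps $\delta_j$, heights $h_i^{(p)}$ and final coordinates.

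For the exact map ($\eta=0$) the bad set is contained in the union of the zero sets $\{f-g=0\}$ over the finitely many comparison pairs, together with the sets where some projected point is at the origin or an angle sits at the branch cut. Each $f-g$ is a nonzero analytic function, which we check by exhibiting one cloud where it is nonzero. Hence each zero set is a closed null set, and its complement is open of full measure. For $\eta>0$ the tie test $|f-g|\le\eta R$ (or $\le\eta$ for angles) changes outcome only on the level sets $\{|f-g|=\eta R(P)\}$. Here $R(P)=\max_i\|\xb_i\|$ is itself piecewise analytic. These level sets are again null, since $|f-g|-\eta R$ is a nonconstant analytic function on each region where the argmax defining $R$ is fixed, and the argmax-switching loci are null too. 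Tolerance-based tie chains (non-transitivity) are harmless: off these level sets every pairwise tie/no-tie outcome is locally constant, so whatever deterministic procedure resolves ties yields the same combinatorial outcome.

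On a small ball $\mathcal B_\infty(P_0,\delta)$ inside $U_\eta$, with $\delta$ chosen below the minimal margin $\min|f-g|$ (respectively $\min\big||f-g|-\eta R\big|$) divided by a local Lipschitz constant of these functions, all discrete choices are constant. A fixed choice convention for $A_p$, made smooth on a neighbourhood of $\xb_p$, is also constant there. The output is then $P\mapsto$ a fixed permutation of fixed smooth expressions, namely rotations $A_p$, the alignment rotation by $\phi_\star$ and possibly a reflection. These are $C^1$ on the closed ball, hence Lipschitz with some constant $L_\eta(P_0)$ in $d_\infty$. Since the ordering is fixed, $\mathrm{EMD}$ is bounded by the average displacement under that fixed matching, which gives the inequality. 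Continuity almost everywhere follows immediately.

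The main obstacle is the nontriviality check: each comparison function must not vanish identically, or have $|f-g|\equiv\eta R$, on some component. The danger comes from structural identities, for example the pole itself always having height $r_\star$ and zero projection. I will handle this by excluding comparisons that are identically decided, since their outcome is constant everywhere and therefore harmless, and by verifying genericity for the rest via explicit perturbations of a single point.
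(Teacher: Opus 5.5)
Your proposal follows the paper's proof. Both freeze every discrete decision on an open full-measure set, using that the level set $\{h=c\}$ of a real-analytic comparison quantity on a connected region is null unless $h\equiv c$ there; in that case the tie rule fixes the outcome, so applying this dichotomy region by region already suffices and your explicit genericity check is unnecessary. Both then take the Lipschitz bound from smoothness on a compact ball and from the fixed row matching in $\mathrm{EMD}$. The paper refines the partition decision by decision rather than enumerating all comparisons at once. Its $\widehat{\mathcal C}_{3,\eta}^\kappa$ bins each scalar $t(P)/R(P)$ and each angle by $Q_\eta$ at fixed, locally finite thresholds $(k+\frac12-\beta)\eta$ rather than using a pairwise test $|f-g|\le\eta R$, but your level-set argument carries over to that definition unchanged.
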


The local Lipschitz property guarantees that small perturbations to a generic point cloud $P$ do not cause its canonical representation $\mathcal C^\kappa_3(P)$, as produced by \our, to change discontinuously. Such a robustness result is not proven in the literature for existing canonicalization methods like ASUN, and we empirically validate this in Tables~\ref{tab:disc-methods} and \ref{tab:path-all}, where \our is more robust to small perturbations compared to ASUN and PCA.

\textbf{Dependence of \our on numerical tolerance.} The numerical tolerance $\eta$ plays an important role in \our. Since \our is a sorting-based algorithm, the quantities considered tied below a certain numerical threshold play an important role in the properties discussed below. 

\emph{Robustness:} As $\eta\to0$, the algorithm becomes more sensitive to small perturbations because most of them now start to cross the tolerance threshold, \emph{Canonicalization invariance:} As $\eta$ increases, the exactness of the sorting ceases to exist since all differences below $\eta$ are now considered tied. Consequently, the numerical implementation starts to differ from Algorithms~\ref{alg:canonicalization-overview} and \ref{alg:three-dimensional-canonicalization}. Thus different poses may start to output different canonicalizations, and \emph{Runtime:} As $\eta$ increases, the number of tied points increases (the cardinality of $\Pi(P)$ from Algorithm~\ref{alg:canonicalization-overview} increases). So, more tie-breaking comparisons need to be made, increasing runtime. 

Tables~\ref{tab:tol-sweep-zoo} and \ref{tab:tol-sweep-qmsym} from Appendix~\ref{app:tolerance-ASUN-PCA} give a detailed analysis on how robustness and runtime vary according with respect to tolerance, and we have subsequently chosen the tolerances of \our (as well as of PCA and ASUN) to achieve the optimal trade-offs.

\section{Experiments}
\label{sec:experiments}
In this section, we test the validity, efficiency, robustness, and end-to-end learning performance of \our against ASUN \citep{baker2024explicit}, and centered PCA, which are leading methods for point cloud canonicalization. In all the experiments of this section, the numerical tolerances of the corresponding canonicalization algorithms are taken as: $10^{-2}$ for ASUN (absolute, in \AA) and $10^{-3}$ for \our (relative to the cloud's radius); PCA has no tolerance. This choice was based on optimal performance for each canonicalization, see Appendix~\ref{app:just-tolerance} for more details. We discuss more experiments and setup details in Appendix~\ref{app:expt}.

Our data and code are publicly available at \url{https://anonymous.4open.science/r/lexican-anon/}.

\subsection{Runtime Analysis}
\label{subsec:runtime}
We consider the wall-clock runtime of \our against ASUN and PCA on the ModelNet40 and QM9 datasets. We report the wall-clock runtime in milliseconds (ms) grouped by the number of points in the point cloud in Table~\ref{tab:runtime-real-data}. As observed, \our is faster than ASUN at every size, by $130$ to $5{,}400\times$ on ModelNet40 and by $6$ to $22\times$ on QM9. PCA, which computes a single eigendecomposition, is the fastest method, about $2$ to $3\times$ faster than \our, but it is not a proper canonicalization: each eigenvector is defined only up to sign (and up to an orthogonal transformation within a repeated eigenvalue), so PCA is not invariant (Tables~\ref{tab:rankwise} and \ref{tab:qmsym}).

\begin{table}[H]
\vspace{-12.5pt}
\small
\setlength{\tabcolsep}{3pt}
\caption{Per-cloud canonicalization runtime in milliseconds (ms) on ModelNet40 and QM9, mean $\pm$ standard error over clouds of the per-cloud median of $10$ calls, one thread, for $\SO(3)$ canonicalization. Best results are in bold.}
\label{tab:runtime-real-data}
\resizebox{\columnwidth}{!}{
\begin{tabular}{@{}lrrr@{}}
\toprule
\multicolumn{4}{c}{\textbf{ModelNet40}} \\
\midrule
\#Points & \our & PCA & ASUN \\
\midrule
$128$ & $0.428{\pm}0.009$ & $\mathbf{0.189{\pm}0.001}$ & $56.8{\pm}0.2$ \\
$256$ & $0.467{\pm}0.008$ & $\mathbf{0.204{\pm}0.001}$ & $158.6{\pm}0.5$ \\
$512$ & $0.574{\pm}0.012$ & $\mathbf{0.231{\pm}0.001}$ & $562.7{\pm}1.4$ \\
$1{,}024$ & $0.782{\pm}0.023$ & $\mathbf{0.279{\pm}0.001}$ & $1{,}873.0{\pm}1.8$ \\
$2{,}048$ & $1.274{\pm}0.042$ & $\mathbf{0.374{\pm}0.001}$ & $6{,}883.9{\pm}12.4$ \\
\midrule
\multicolumn{4}{c}{\textbf{QM9}} \\
\midrule
\#Atoms & \our & PCA & ASUN \\
\midrule
$6$--$10$ & $0.411{\pm}0.051$ & $\mathbf{0.171{\pm}0.004}$ & $2.490{\pm}0.113$ \\
$11$--$15$ & $0.358{\pm}0.003$ & $\mathbf{0.169{\pm}0.000}$ & $3.940{\pm}0.037$ \\
$16$--$20$ & $0.360{\pm}0.001$ & $\mathbf{0.171{\pm}0.000}$ & $5.186{\pm}0.016$ \\
$21$--$25$ & $0.363{\pm}0.003$ & $\mathbf{0.170{\pm}0.000}$ & $6.302{\pm}0.024$ \\
$26$--$29$ & $0.355{\pm}0.006$ & $\mathbf{0.169{\pm}0.005}$ & $7.872{\pm}0.056$ \\
\bottomrule
\end{tabular}
}
\vspace{-12.5pt}
\end{table}

\subsection{Rank-wise Invariance Error}
\label{subsec:rank_wise}
We measure consistency by comparing canonical forms before and
after random rotations and translations. We report the mean Earth Mover's Distance (EMD) in \AA{}, lower is better.

\begin{table}[H]
\centering
\setlength{\tabcolsep}{2pt}
\caption{Invariance error by rank: mean EMD (\AA, two significant digits) between the canonical form of each QM9 molecule and that of a random rotation, translation and permutation of it. Best results are in bold.}
\label{tab:rankwise}
\begin{tabular}{@{}l@{\hspace{10pt}}rrr@{}} 
\toprule
Method & \multicolumn{1}{r}{R1} & \multicolumn{1}{r}{R2} & \multicolumn{1}{r}{R3} \\
\midrule
PCA & $\mathbf{3.5{\times}10^{-16}}$ & $5.4{\times}10^{-16}$ & $8.9{\times}10^{-1}$ \\
ASUN & $1.3{\times}10^{-15}$ & $1.3{\times}10^{-15}$ & $3.8{\times}10^{-2}$ \\
\our & $\mathbf{3.5{\times}10^{-16}}$ & $\mathbf{4.2{\times}10^{-16}}$ & $\mathbf{1.8{\times}10^{-6}}$ \\
\bottomrule
\end{tabular}%
\end{table}

From Table~\ref{tab:rankwise}, we make the following observations: \textbf{(1)} \our stays invariant at every rank, with a mean error of at most $2{\times}10^{-6}$ \AA{}, \textbf{(2)} for the molecules of rank $3$, neither ASUN nor PCA stays invariant.

\subsection{Invariance Error on QM-sym}
\label{subsec:qmsym}
In Table~\ref{tab:qmsym} we test the reliability of \our versus PCA and ASUN baselines on the QM-sym dataset, containing molecules with non-trivial self-symmetries. We rotate and translate each molecule ten times at random and measure the EMD between the canonical form of the molecule and that of each copy. We classify failure as a copy that surpasses the $0.1$ \AA{} EMD threshold, which is the error in measurement range of molecules in the QM9 dataset \citep{kim2019energy}. 

\begin{table}[H]
\centering
\caption{Symmetry group canonicalization errors: mean EMD (in \AA) over random transformations in QM-sym. Best results are in bold.}
\label{tab:qmsym}
\resizebox{\columnwidth}{!}{%
\begin{tabular}{@{}l r cc cc cc@{}}
\toprule
& & \multicolumn{2}{c}{PCA} & \multicolumn{2}{c}{ASUN} & \multicolumn{2}{c}{\our} \\
\cmidrule(lr{2pt}){3-4} \cmidrule(lr{2pt}){5-6} \cmidrule(lr{2pt}){7-8}
Group & $N_{\mathrm{mol}}$ & Mean & Fail\% & Mean& Fail\% & Mean & Fail\% \\
\midrule
$C_{2h}$ & $81{,}564$ & $0.564$ & $99.9$ & $0.616$ & $50.3$ & $\mathbf{0.000}$ & $\mathbf{0.0}$ \\
$C_{3h}$ & $81{,}017$ & $1.715$ & $100.0$ & $0.155$ & $32.0$ & $\mathbf{0.000}$ & $\mathbf{0.0}$ \\
$C_{4h}$ & $25{,}668$ & $1.447$ & $100.0$ & $0.267$ & $60.0$ & $\mathbf{0.000}$ & $\mathbf{0.0}$ \\
\midrule
All & $188{,}249$ & $1.180$ & $99.9$ & $0.370$ & $43.8$ & $\mathbf{0.000}$ & $\mathbf{0.0}$ \\
\bottomrule
\end{tabular}%
}
\end{table}

As observed, \textbf{(1)} \our stays within the $0.1$ \AA{} threshold on every molecule in QM-sym (mean EMD $2{\times}10^{-7}$ \AA{}), while the baselines do not, \textbf{(2)} ASUN breaks its own invariance on $43.8\%$ of molecules and PCA breaks on $99.9\%$.

\subsection{Perturbation Stability on Molecular Configurations}
\label{subsec:continuous}

Almost-everywhere regularity in the ambient point-cloud space
does not determine behavior on molecular geometries, which may
belong to lower-dimensional families. We therefore evaluate
perturbation sensitivity directly on QM-sym. For each tested
perturbation magnitude $\varepsilon$, we report the empirical
fraction $\widehat{p}(\varepsilon)$ of trials in which the EMD
between the canonical forms of the original and perturbed
clouds exceeds the jump threshold of $0.1$. These measurements
characterize finite-scale sensitivity.

In Table~\ref{tab:disc-methods} we see that \textbf{(1)} PCA and ASUN converge to $0.171$ and $0.031$, respectively, while \our reaches $0$ (no jump in $64{,}000$ trials at $\varepsilon=10^{-8}$), \textbf{(2)} \our shows no jumps on the pilot set, while PCA and ASUN jump on $33/82$ and $9/75$ geometries.

\begin{table}[H]
\centering
\small
\caption{The $\varepsilon\to0$ limit on QM-sym geometries. $\alpha_{99}$ is the $99$-th percentile amplification, pooled over all $\varepsilon$, while Jumps specifies the number of geometries that show a jump on the pilot set. Best results are in bold.}
\label{tab:disc-methods}
\begin{tabular}{@{}cccc@{}}
\toprule $p(\varepsilon)$ & PCA & ASUN & \our \\
\midrule
$p(10^{-8})$ & $0.171$ & $0.031$ & $\mathbf{0.000}$ \\
$p(10^{-6})$ & $0.515$ & $0.032$ & $\mathbf{0.001}$ \\
$p(10^{-4})$ & $0.519$ & $0.094$ & $\mathbf{0.093}$ \\
$p(10^{-2})$ & $0.543$ & $0.786$ & $\mathbf{0.522}$ \\
\midrule $\alpha_{99}$ & $2.7{\times}10^{7}$ & $7.5{\times}10^{5}$ & $\mathbf{3{\times}10^{3}}$ \\
Jumps & $33/82$ & $9/75$ & $\mathbf{0/82}$ \\
\bottomrule
\end{tabular}%
\end{table}

We also performed the analysis on a pilot set of $82$ example point clouds, comprised of QM9 and GDB structures in their equilibrium geometries, together with synthetic high-symmetry solids (specified in Appendix~\ref{app:symmetric-polyhedra}). Resolving  $p(\varepsilon)$ per geometry (see Figure~\ref{fig:geometry-census} in Appendix~\ref{app:additional_results}) shows that failures are widespread rather than concentrated on a few specific inputs: PCA exhibits discontinuities on $33$ of $82$ geometries, and ASUN on $9$ of the $75$ it can process, at a median of $82\%$ and $36\%$ across all discontinuous directions respectively, while \our is continuous across all geometries.

\subsection{Stability Under Controlled Perturbation} 
\label{subsec:stability}
To further test robustness to perturbation of \our, in Table~\ref{tab:path-all} we consider a set of four common molecules (NH$_{3}$, CH$_{4}$, H$_{2}$O$_{2}$, and C$_6$H$_6$). We define a one-parameter path $\gamma : [0,1] \longrightarrow (\mathbb{R}^3 \times \Lambda)^n$, with $\gamma(t) = P_t = \big((\mathbf{x}_i(t), \ell_i)\big)_{i=1}^n$, from each symmetric molecule $P_0$ in its equilibrium geometry, to a fully perturbed geometry $P_1$. We measure discontinuity, which we define as a difference in canonical form by more than $0.1$ in the radius of the point cloud (given by $r=\sqrt{\frac{1}{n}\sum_{i} \lVert \mathbf{x}_{i} \rVert^{2}}$) along $\gamma(t)$, between the canonicalized output at $t_{i}$ and $t_{i+1}$ along $600$ steps. 

\begin{table}[H]
\centering
\caption{Number of discontinuities (summed over $12$ rotated copies) along controlled perturbation paths, with the largest single move (in \AA{}) in parentheses. Best results are in bold.}
\label{tab:path-all}
\resizebox{\columnwidth}{!}{%
\begin{tabular}{@{}l rrr@{}}
\toprule
Path & PCA & ASUN & \our \\
\midrule
$\mathrm{NH}_3$ stretch & $17$~($0.875$) & $\mathbf{0}$~($0.001$) & $\mathbf{0}$~($0.000$) \\
$\mathrm{NH}_3$ bend & $17$~($0.860$) & $\mathbf{0}$~($0.011$) & $\mathbf{0}$~($0.001$) \\
$\mathrm{CH}_4$ stretch & $5301$~($1.019$) & $12$~($0.802$) & $\mathbf{0}$~($0.001$) \\
$\mathrm{C}_6\mathrm{H}_6$ stretch & $4$~($0.742$) & $\mathbf{0}$~($0.018$) & $\mathbf{0}$~($0.000$) \\
$\mathrm{H}_2\mathrm{O}_2$ torsion & $\mathbf{3}$~($0.745$) & $46$~($0.804$) & $24$~($0.428$) \\
\midrule
Max move & $1.019$ & $0.804$ & $\mathbf{0.428}$ \\
\bottomrule
\end{tabular}%
}
\end{table}

As observed,  \our has no discontinuity on $4/5$ paths and never more discontinuities than ASUN; on the $\mathrm{H}_2\mathrm{O}_2$ torsion, which passes through a symmetric configuration, PCA has fewer ($3$ against $24$) and \our has the lowest overall maximum move. We extend these tests to explore more general perturbation effects on a larger set of 18 representative clouds in Appendix~\ref{app:symmetric-polyhedra}.

We further test the robustness of \our and baselines in a real dataset of symmetric molecules, namely QM-sym. We quantify our results by considering the ``flip rate'', the amplification factor and the failure to be invariant. In Table~\ref{tab:PCA-ASUN-Lexi-flip-qmsym},  flip rate is the fraction of perturbations whose canonical form varied $10\times$ further than the input moved, $\alpha$ is the output movement divided by the input movement (1 means no amplification). Both of the values, if lower, imply robustness. 

\begin{table}[H]
\centering
\caption{Robustness in QM-sym ($1{,}000$ molecules). Not invariant counts the molecules with one of $12$ random views more than $0.1$~\AA{} (EMD) from the canonical form, out of those the algorithm canonicalizes. Best results are in bold.}
\label{tab:PCA-ASUN-Lexi-flip-qmsym}
\resizebox{\columnwidth}{!}{
\begin{tabular}{lcccc}
\toprule
& Flip rate & Median $\alpha$ & p95 $\alpha$ & Not invariant \\
\midrule
PCA & $0.581$ & $57.9$ & $3.1{\times}10^{6}$ & $1000/1000$ \\
ASUN @ $10^{-2}$ & $0.782$ & $54.9$ & $1.5{\times}10^{3}$ & $456/1000$ \\
\our{} @ $10^{-3}$ & $\mathbf{0.208}$ & $\mathbf{1.52}$ & $\mathbf{435}$ & $\mathbf{0/1000}$ \\
\bottomrule
\end{tabular}}
\end{table}

As observed, \textbf{(1)} \our has lower flip rate than baselines and has a lower median amplification rate \textbf{(2)} \our preserves invariance across all the tested QM-sym molecules which the baselines fail to do.

\subsection{End-to-end Learning Experiments}
\label{subsec:qm9_ml}
To test the impact of our method on downstream learning tasks, we evaluate \our against baselines on molecular property prediction and 3D shape classification tasks.

\noindent\textbf{Regression on molecules.} We tested property prediction on QM9 using a standard Transformer architecture (details in Appendix~\ref{app:transformer_architecture}) trained on the regular dataset, on the ASUN-canonicalized dataset, and on the proposed canonicalized representation.

As observed in Table~\ref{tab:qm9_asun_comparison}, \our achieves the lowest error and improves on ASUN on every seed.

\begin{table}[H] 
\centering
\setlength{\tabcolsep}{2pt} 
\renewcommand{\arraystretch}{0.95} 
\caption{QM9 isotropic polarizability prediction (Transformer, Table~\ref{tab:model_hyperparameters}). Test MAE and RMSE over three seeds ($\text{mean} \pm \text{STD}$), on randomly rotated, permuted and translated test molecules (see Appendix~\ref{app:transformer_architecture} for more details).} 
\label{tab:qm9_asun_comparison} 
\begin{tabular}{@{}lcc@{}} 
\toprule 
Representation & MAE & RMSE \\ 
\midrule 
Regular & $0.81{\pm}0.07$ & $1.22{\pm}0.09$ \\ 
ASUN & $0.46{\pm}0.04$ & $0.70{\pm}0.04$ \\ 
\our & $\mathbf{0.40{\pm}0.04}$ & $\mathbf{0.60{\pm}0.06}$ \\ 
\bottomrule 
\end{tabular} 
\end{table}

\noindent\textbf{Classification of 3D point clouds.} Using the ModelNet40, we further demonstrate the superiority of \our against ASUN for point cloud alignment. We sample $512$ points from each cloud and train invariant/equivariant (SchNet \citep{schutt2017schnet} and EGNN \citep{satorras2021en}) and non-invariant (GCN \citep{kipf2017semisupervisedclassificationgraphconvolutional}, GCNII \citep{chen2020simpledeepgraphconvolutional}, SchNet-Pos and EGNN-Pos) models on ASUN versus \our canonicalized data. 

\begin{table}[H]
\centering
\caption{Accuracy of classification (\%, mean $\pm$ STD over three seeds). Best results are in bold.}
\label{tab:results-n512}
\resizebox{\columnwidth}{!}{%
\begin{tabular}{@{}lcccc@{}}
\multicolumn{5}{c}{ModelNet40 ($n=512$)} \\
\cmidrule(lr){1-5}
Model & $z/\SO(3)$ & $\SO(3)/\SO(3)$ & ASUN & \our \\
\midrule
SchNet & $27.8\pm0.5$ & $\bm{28.7\pm0.3}$ & $28.6\pm0.7$ & $28.1\pm0.1$ \\
EGNN & $\bm{34.9\pm0.7}$ & $31.4\pm4.9$ & $32.2\pm0.8$ & $33.4\pm0.1$ \\
\midrule
GCN & $8.2\pm0.6$ & $15.2\pm0.2$ & $17.1\pm2.5$ & $\bm{28.6\pm1.3}$ \\
GCNII & $6.9\pm0.8$ & $19.1\pm1.1$ & $21.9\pm1.3$ & $\bm{29.1\pm2.7}$ \\
SchNet-Pos & $10.5\pm1.1$ & $29.2\pm2.3$ & $30.0\pm0.8$ & $\bm{37.1\pm3.2}$ \\
EGNN-Pos & $10.3\pm1.4$ & $31.5\pm2.3$ & $31.8\pm1.5$ & $\bm{37.0\pm0.5}$ \\
\bottomrule
\end{tabular}%
}
\end{table}
As seen in Table~\ref{tab:results-n512}, \textbf{(1)} performance for invariant models remains largely unchanged, but \our beats ASUN for $4/4$ non-invariant models and produces overall best accuracy of $37.1\%$ across architectures, \textbf{(2)} using \our representation in EGNN-pos/SchNet-pos (which add an embedding of the coordinates to the invariant backbone of EGNN/SchNet) improves upon the respective baseline invariant models.

\section{Conclusion and Future Work}
\label{sec:conclusion}
In this work, we introduced \our, a lexicographic canonicalization method for labeled 3D point clouds that provides a provable canonical representative under permutation and either $\SO(3)$ or $\mathrm{O}(3)$ transformations, while achieving high-precision invariance and a substantially improved runtime over ASUN. Experiments on QM9, QM-sym, ModelNet40 datasets, and controlled perturbation on molecular configurations further show that \our avoids the invariance failures and reduces discontinuity jumps observed in PCA- and ASUN-based approaches, while improving downstream performance for non-invariant architectures. Our canonicalization runs in the worst case $O(n^2 \log(n))$ compared to ASUN's $O(n\log(n))$, yet empirically it is faster on QM9 and ModelNet40, as seen in Table~\ref{tab:runtime-real-data}. Extending our empirical evaluation of combining \our with additional state-of-the-art architectures remains a direction for future work. In particular, we aim to investigate the trade-offs between specialized equivariant architectures and lightweight standard models aided by canonicalization.

\subsection*{AI use statement}
In this work, we used generative AI tools for assistance in implementing the algorithmic methods for the experiments, assist in the writing proofs of theorems and claims by finding errors and refining notation, refining the readability of the manuscript, assistance in generating a synthetic data set of symmetric point clouds in Appendix~\ref{app:symmetric-polyhedra-data}, and provide feedback on the experiment methodology. We have not used generative AI tools to help develop theoretical models or conceptual frameworks, formulate mathematical claims, provide critical ingredients for proving mathematical claims, propose or refine hypotheses,  clean and reformat datasets, support qualitative and thematic data analysis, interpret results, and assisting with translation is not applicable to this work.
Additionally, we used generative AI tools for formatting the tex document, assistance in refining figures and tables, and conducting literature review to survey the state of progress in the field.
We have reviewed all AI-assisted work. We manually inspected LLM-generated code and tested for correctness  by $3$ authors. All writing edits and refinements assisted by LLMs was verified and reviewed by the authors. All proofs refined by LLMs were checked and verified line by line by the authors. We take responsibility for the final content of this work, including text, claims, or artifacts produced with the aid of generative AI.

\subsubsection*{Acknowledgments}
The authors acknowledge London Geometry and Machine Learning (LOGML) Summer School 2026 since this work was initially started as a project during the School. D.M. acknowledges support by the UKRI Centre for Doctoral Training in Algebra, Geometry and Quantum Fields (AGQ), Grant Number EP/Y035232/1. L.B. acknowledges financial support from the European Regional Development Fund for the project ‘Materials for clean energy, advanced sensors and quantum technologies’ (Grant No. PK.1.1.10.0002). S.H. thanks the Gloria and Ken Levy Foundation for a research fellowship.

\bibliographystyle{plainnat} 
\bibliography{references}   

\clearpage
\appendix
\thispagestyle{empty}

\onecolumn

\appendix

\section{Additional Related Work}\label{apdx:add-rlt-wrk}

\textbf{Additional canonicalization methods.} Other approaches learn the canonicalization itself \citep{kaba2023equivariance}. Together with the methods in Section~\ref{sec:related}, these approaches either average over multiple valid frames, resolve only specific ambiguities in the representation, or rely on assumptions that exclude non-trivial stabilizers. In contrast, our method defines a single map that returns a provable canonical representative for every point cloud, including symmetric inputs with non-trivial stabilizers.

\textbf{Theoretical limitations of canonicalization.} \citet{dymlawrence2024impossibility} establish impossibility results for continuous canonicalization under common group actions and illustrate the trade-off using exact but discontinuous constructions, such as lexicographic sorting. Relatedly, \citet{baker2024explicit} show that continuous frames do not exist in general for $\mathrm{O}(n)$ and $\SO(n)$ when $n\geq3$. The difficulty arises particularly at symmetric inputs, where non-trivial stabilizers make the choice of a unique representative inherently ambiguous. Our work, therefore, focuses on constructing an exact representative and characterizing its numerical behavior around these unavoidable discontinuities.

\textbf{Equivariant and invariant representations.} Beyond canonicalization, equivariant models preserve the transformation structure throughout the network, while invariant architectures produce outputs that are unchanged under the group action. Another line of work constructs complete geometric invariants of point clouds using quantities such as point- or simplex-wise distance distributions \citep{p1-widdowson2023recognizing,p1-kurlin2024polynomial,p2-kurlin2023simplexwise,p1-widdowson2022resolving}. Unlike canonicalization, invariant representations generally do not return an oriented representative of the original orbit. Canonicalization therefore provides a complementary strategy, allowing standard non-equivariant architectures to operate directly on a canonical representative.

\section{Algorithms for Canonical Pose Computation}
\label{app:algorithms}
First, we present the two-pointer based algorithm to find the lexicographically minimum orientation of a 2D string. The exact details of the algorithm are given in the main text.For a point cloud $Y=((y_i,\ell_i))_{i=1}^n$, let $L(Y)$
list its original labeled records in the order determined by
$(a_i,y_{i,1},\ldots,y_{i,d})$.
When comparing two candidate sequences for the same input cloud,we compare these rank-coordinate keys lexicographically.

\begin{algorithm}[htb!]
\small
\caption{\textbf{Least cyclic shift of a circular word.}}
\label{alg:linear-cyclic-shift}
\KwIn{Nonempty word $W=(\omega_0,\ldots,\omega_{m-1})$}
\KwOut{Index $s$ such that $\tau_sW=\min_{0\leq r<m}\tau_rW$}
\If{$m=1$}{\KwRet{$0$}\;}
$i\gets0,\; j\gets1,\; t\gets0$\;
\While{$i<m$ \textnormal{and} $j<m$ \textnormal{and} $t<m$}{
    $u\gets\omega_{(i+t)\bmod m},\; v\gets\omega_{(j+t)\bmod m}$\;
    \If{$u=v$}{
        $t\gets t+1$\;
    }
    \ElseIf{$u<v$}{
        $j\gets j+t+1$\;
        \If{$j\leq i$}{$j\gets i+1$\;}
        $t\gets0$\;
    }
    \Else{
        $i\gets i+t+1$\;
        \If{$i\leq j$}{$i\gets j+1$\;}
        $t\gets0$\;
    }
}
\KwRet{$\min\{i,j\}$}\;
\end{algorithm}

Now we give the algorithm to compute the canonicalization of a 3D point cloud that uses Algorithm~\ref{alg:linear-cyclic-shift} after projecting from a chosen axis and chooses the representation with the lexicographically minimum output. The exact details of the algorithm are given in the main text.

\begin{algorithm}[htb!]
\small
\caption{\textbf{Canonicalization of a centered labeled 3D point cloud.}}
\label{alg:three-dimensional-canonicalization}
\KwIn{Centered cloud $P=((\xb_i,\ell_i))_{i=1}^n$, group $G_{3}^{\kappa}$ with $\kappa\in\{\SO,\mathrm O\}$}
\KwOut{Canonical form $\mathcal C_3^\kappa(P)$}
Compute $r_i\gets\|\xb_i\|,\; r_\star\gets\max_i r_i$\;
\If{$r_\star=0$}{\KwRet{zero points ordered by their labels}\;}
Compute the rarest-first ranks $a_i$ of the original labels $\ell_i$\; construct $\Pi(P)$ as in Eq.~\eqref{eq:candidate-poles}\; $D_{\mathrm{best}}\gets\bot$\;
\For{$p\in\Pi(P)$}{
    Choose $A_p\in\SO(3)$ s.t.\ $A_p\xb_p/\|\xb_p\|=\eb_3$\;
    Write $A_p\xb_i=(\ub_i^{(p)},h_i^{(p)})$ and construct
$P_p=((\ub_i^{(p)},\ell_i,h_i^{(p)}))_{i=1}^n$,
retaining the previously computed ranks $a_i$\;
    \eIf{$\kappa=\SO$}{
        $\widehat P_p\gets\mathcal C_2^{\SO}(P_p)$, using Algorithm~\ref{alg:linear-cyclic-shift} to compute $s(P_p)$\;
    }{
        $\widehat P_p\gets\mathcal C_2^{\mathrm O}(P_p)$, using Algorithm~\ref{alg:linear-cyclic-shift} for $P_p$ and $\overline P_p$\;
    }
   Restore each retained height:
$(\widehat u,\ell,h)\mapsto
((\widehat u_1,\widehat u_2,h),\ell)$\;
Order the lifted records by $(a,x,y,z)$, and denote the ordered candidate by $Y_p^\kappa$\;
$D_p^\kappa\gets Y_p^\kappa$\;
\If{$D_{\mathrm{best}}=\bot$ \textnormal{or} $D_p^\kappa<D_{\mathrm{best}}$}{$D_{\mathrm{best}}\gets D_p^\kappa$\;}
}
\KwRet{$D_{\mathrm{best}}$}\;
\end{algorithm}

\section{Proofs of Technical Results}
We use the circular word $W(P)=(\omega_0,\ldots,\omega_{m-1})$
defined in the main text. For $m>0$, let $\tau_sW(P)$ denote the
word read cyclically from position $s$, with indices taken modulo $m$.
Define
\begin{equation}
\label{eq:attributed-least-shift}
s(P)\in
\arg\min_{0\leq s<m}\tau_sW(P).
\end{equation}
The minimum uses lexicographic order. In two dimensions, each letter
is $(a_{i_j},\rho_{i_j},\delta_j)$, where $a_i$ is the rarity rank
of the original label. For a projected three-dimensional cloud,
the height $h_{i_j}$ is appended to each letter.

For $m>0$, let $j_t=i_{(s(P)+t)\bmod m}$.
After selecting the cyclic start, define the orientation key
\begin{equation}
\label{eq:attributed-orientation-key}
B(P)\coloneqq
\left(
(a_{j_t})_{t=0}^{m-1},
(\delta_{(s(P)+t)\bmod m})_{t=0}^{m-1},
(\rho_{j_t})_{t=0}^{m-1}
\right).
\end{equation}
This key compares the complete label-rank sequence first,
then the complete angular-gap sequence, and finally the complete
radial sequence. For a projected three-dimensional cloud,
append the complete height sequence $(h_{j_t})_{t=0}^{m-1}$
as the final component. If $m=0$, no cyclic start is needed
and $B(P)$ is the empty sequence.

We start with formal theorem statements and proofs of the validity of our canonicalization $\mathcal C_d^\kappa(\cdot)$ for $d\in\{2,3\}$ and $\kappa\in\{\SO, \mathrm O\}$
\label{app:proofs}
\begin{lemma}[\textbf{Correctness of 2D canonical maps}]
\label{lem:circular-canonicalization}
\textit{The maps $\mathcal C_2^{\SO}(\cdot)$ and
$\mathcal C_2^{\mathrm O}(\cdot)$ are canonicalizations for
$G_2^{\SO}$ and $G_2^{\mathrm O}$, respectively.}
\end{lemma}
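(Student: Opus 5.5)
The plan is to verify the two defining properties of Definition~1, orbit membership and invariance, first for $\mathcal C_2^{\SO}$ and then for $\mathcal C_2^{\mathrm O}$. \emph{Orbit membership} is immediate. $\mathcal C_2^{\SO}(P)$ is obtained from $P$ by the rotation through $\pi/2-\phi_\star$, which fixes origin points, followed by the permutation $\sigma$. So it equals $(\sigma^{-1},R_{\pi/2-\phi_\star})\cdot P\in G_2^{\SO}\cdot P$. Likewise $\mathcal C_2^{\mathrm O}(P)$ is either this element or the same construction applied to $\overline P$, which is the image of $P$ under a reflection. Either way it lies in $G_2^{\mathrm O}\cdot P$. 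The degenerate case $m=0$ (all points at the origin) is trivially invariant, since the output is just the labels sorted.

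For \emph{$\SO(2)$-invariance} I would proceed in three steps.

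\textbf{Step 1: the word is invariant up to cyclic shift.} The rarest-first ranks $a_i$ depend only on the multiset of labels, so they are permutation invariant. A rotation by $\theta$ adds $\theta$ to every angle modulo $2\pi$. This preserves the cyclic order of the non-origin points, all gaps $\delta_j$, and all radii, and it can only move the wrap-around point of the angular sort. Points with equal $(\phi,a,\rho)$ carry identical letters, so permuting them leaves the word unchanged. Hence $W(g\cdot P)$ is a cyclic shift of $W(P)$, and therefore $W_{\min}(g\cdot P)=W_{\min}(P)$.

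\textbf{Step 2: a reconstruction claim.} Given a start $s$, the reading $\tau_sW(P)$ determines the multiset $\{(\widehat u_i,a_i)\}$ of non-origin points rotated so that the start point lies on $+Y$. Indeed, the $t$-th point sits at angle $\pi/2+\sum_{r<t}\delta_{(s+r)\bmod m}$ with radius $\rho_{j_t}$ and rank $a_{j_t}$; coincident angles are handled by $\delta=0$. Origin points are untouched by every rotation.

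\textbf{Step 3: the output depends only on $W_{\min}$.} The main obstacle is that the argmin $s(P)$ need not be unique, and different minimizing starts select different reference points $\phi_\star$. Step 2 resolves this: any two minimizing starts give the same reading $W_{\min}(P)$, hence the same rotated labeled multiset. The final sort by $(a,\widehat u_1,\widehat u_2)$ is a function of that multiset alone, because equal keys are identical records. So $\mathcal C_2^{\SO}(P)$ is a function of $W_{\min}(P)$ together with the origin labels. Combined with Step 1, this gives $\mathcal C_2^{\SO}(g\cdot P)=\mathcal C_2^{\SO}(P)$.

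For \emph{$\mathrm O(2)$-invariance}, I first note that $B(P)$ is well defined. It is a componentwise rearrangement of the letters of $\tau_{s(P)}W(P)=W_{\min}(P)$, so it does not depend on which minimizer is chosen, and by Step 1 it is $\SO(2)$-invariant. Now take $g=(\pi,Q)$ and split on $\det Q$.
\begin{itemize}
\item If $\det Q=1$, then $\overline{g\cdot P}$ is a rotated, permuted copy of $\overline P$. Both keys and both $\SO(2)$ canonical forms are then unchanged, so the selected branch and its output agree.
\item If $\det Q=-1$, then $g\cdot P$ is a rotated copy of $\overline P$ and $\overline{g\cdot P}$ is a rotated copy of $P$. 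The pair $(P,\overline P)$ is swapped, so the rule selecting the strictly smaller key returns the same form.
\end{itemize}
It remains to handle a tie $B(P)=B(\overline P)$. Since $B$ is an invertible rearrangement of the minimal reading, equal keys force $W_{\min}(P)=W_{\min}(\overline P)$. Step 3 then gives $\mathcal C_2^{\SO}(P)=\mathcal C_2^{\SO}(\overline P)$, so both branches coincide, which confirms the remark after Eq.~\eqref{eq:circular-canonicalization-o}.
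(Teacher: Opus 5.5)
Your proposal is correct and follows essentially the same route as the paper: orbit membership via the explicit rotation through $\pi/2-\phi_\star$ and a row permutation, $\SO(2)$-invariance because a rotation only cyclically shifts the word, well-definedness by reconstructing the output from any minimizing reading, and the $\mathrm O(2)$ case via the split $\det Q=\pm1$, with the tie $B(P)=B(\overline P)$ settled by the same reconstruction argument. One small slip in Step 1: two coincident points with equal $(\phi,a,\rho)$ do not carry identical letters, since the first of them has gap $\delta=0$; your conclusion still holds because the word depends only on the sorted sequence of triples, which is unchanged when such points are permuted.
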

\begin{proof}
We verify that the maps are well defined, invariant under their respective
group actions, and lie in the orbit of the input under the corresponding group actions. Rarity ranks depend only on the multiset of original labels.
They are unchanged by spatial transformations and are permuted
together with their points. Distinct original labels present in
the input have distinct ranks.

\paragraph{Well-definedness:}
Let $I$ be the set of indices of points not at the origin and let $m\coloneqq |I|$. First suppose $m>0$. Let $\phi_i$ be the polar angle of each such point. Order the indices in $I$ lexicographically by
$(\phi_i,a_i,\rho_i)$ and denote them by
$i_0,\ldots,i_{m-1}$. Define the angular gaps by
\begin{align*}
\delta_j=\phi_{i_{j+1}}-\phi_{i_j},\,\forall\,0\leq j<m-1,\quad
\delta_{m-1}=2\pi+\phi_{i_0}-\phi_{i_{m-1}}
\end{align*}
This lets us write $P$ as a circular word:
\begin{equation}
\label{eq:attributed-circular-word}
W(P)=(\omega_0,\ldots,\omega_{m-1}),
\quad
\omega_j=(a_{i_j},\rho_{i_j},\delta_j)
\end{equation}
Such a circular word has an inherent total order, induced by its coordinates. We choose a shift $s(P)$ (see Eq.~\eqref{eq:attributed-least-shift}) which gives the word with the least lexicographic order
Thus, the starting position is selected by repeatedly comparing
$(a,\rho,\delta)$ around the circle and choosing the lexicographic minimum.

Suppose $s$ and $s'$ both achieve the minimum in
\eqref{eq:attributed-least-shift}. Then
$\tau_sW(P)=\tau_{s'}W(P)$.
Their corresponding rank, radius, and angular-gap sequences
therefore agree. Consequently, they also give the same channelwise
orientation key $B(P)$.

Once the first selected point is placed at angle $\pi/2$, the remaining
canonical angles are determined recursively by angular increments. More
precisely, for $j_t=i_{(s+t)\bmod m}$, we have:
\begin{align*}
    \widehat\phi_t=\frac{\pi}{2}+\sum_{r=0}^{t-1}\delta_{(s+r)\bmod m}\pmod{2\pi},
\end{align*}
The empty sum is zero. Thus either minimizing starting point reconstructs
the same non-origin Cartesian records. Their original labels
are determined by their ranks among the labels present in $P$ and points at the origin remain fixed and are included in the final
ordering by $(a_i,\widehat u_{i,1},\widehat u_{i,2})$.
Hence tied starting positions give the same complete output,
and both $\mathcal C_2^{\SO}$ and $B(P)$ are well defined.

\paragraph{Permutation invariance:}
A permutation changes only the indices attached to the points. The subsequent
ordering by $(\phi_i,a_i,\rho_i)$ therefore produces the same
ordered sequence of point records. If two such records are identical, their
relative order is irrelevant because exchanging them does not change the
output. After rotation, all points, including those at the origin, are
ordered by $(a_i,\widehat u_{i,1},\widehat u_{i,2})$.
This ordering also does not depend on the input row indices. Consequently, for every $\pi\in\mathfrak S_n$,
\begin{align*}
W(\pi\cdot P)=W(P),\,
B(\pi\cdot P)=B(P),\,
\mathcal C_2^{\SO}(\pi\cdot P)=\mathcal C_2^{\SO}(P).
\end{align*}

\paragraph{$\SO(2)$ invariance.}

Let $R_\beta\in\SO(2)$ be a rotation by angle $\beta$, and let $P'=R_\beta P$. The radii and attributes are unchanged, while every nonzero point has angle $\phi_i'=\phi_i+\beta\pmod{2\pi}.$ Adding the same angle to every point preserves their circular order. Hence, for some $r$, $W(P')=\tau_rW(P).$ The set of cyclic shifts of a word is unchanged when the word itself is cyclically shifted. Therefore $\min_s\tau_sW(P')=\min_s\tau_sW(P),$ and hence $B(P')=B(P)$.

Choose corresponding minimizing positions for $P$ and $P'$. The selected
points differ in angle by $\beta$. For every index $t$, we have:
\begin{align*}
\widehat\phi_t'&=\phi_{j_t}'-\phi_{j_0}'+\frac{\pi}{2}\pmod{2\pi}\\
&=(\phi_{j_t}+\beta)-(\phi_{j_0}+\beta)+\frac{\pi}{2}\pmod{2\pi}\\
&=\widehat\phi_t.
\end{align*}
The canonical radii, angles, attributes, and row order are therefore
unchanged. Together with permutation invariance, this proves that $\mathcal C_2^{\SO}(g\cdot P)=\mathcal C_2^{\SO}(P),\,\forall g\in G_2^{\SO}.$

The output is obtained by rotating every point through
$\pi/2-\phi_{j_0}$ and then applying a row permutation. Thus $\mathcal C_2^{\SO}(P)\in G_2^{\SO}\cdot P.$ If $m=0$, every point is at the origin, and the result follows directly from
the prescribed ordering of the original labels. Hence $\mathcal C_2^{\SO}(\cdot)$ is a canonicalization for
$G_2^{\SO}$.

\paragraph{$\mathrm O(2)$ invariance.}

Let $F=\operatorname{diag}(1,-1)$ be reflection across the first coordinate
axis, so that $\overline P=FP$. We will use the fact that every $Q\in\mathrm O(2)$ is either
$Q=R_\beta$ or $Q=R_\beta F$ for some $\beta$, where $R_\beta$ denotes rotation in the 2D plane by an angle $\beta$.

First suppose $Q=R_\beta$. By the $\SO(2)$ invariance proved above, we get: $B(QP)=B(P)$ and $\mathcal C_2^{\SO}(QP)=\mathcal C_2^{\SO}(P).$
Since $FR_\beta=R_{-\beta}F$, we also have $\overline{QP}=FR_\beta P=R_{-\beta}\overline P.$
Another application of $\SO(2)$ invariance gives
$B(\overline{QP})=B(\overline P),\quad, \mathcal C_2^{\SO}(\overline{QP})=\mathcal C_2^{\SO}(\overline P).$
Thus a proper rotation leaves both candidates in Eq.~\eqref{eq:circular-canonicalization-o} unchanged.

Now suppose $Q=R_\beta F$. Then we have:
\begin{align*}
    QP=R_\beta\overline P,\quad\overline{QP}=FR_\beta FP=R_{-\beta}P~.
\end{align*}
Consequently, we get
\begin{align*}
B(QP)&=B(\overline P),\quad
\mathcal C_2^{\SO}(QP)=\mathcal C_2^{\SO}(\overline P)~,\\
B(\overline{QP})&=B(P),\quad
\mathcal C_2^{\SO}(\overline{QP})=\mathcal C_2^{\SO}(P)~.
\end{align*}
Therefore, an improper transformation exchanges the two candidates in Eq.~\eqref{eq:circular-canonicalization-o}. Selecting the candidate with the smaller orientation key $B$ is unchanged by this exchange. Hence
$
\mathcal C_2^{\mathrm O}(g\cdot P)=
\mathcal C_2^{\mathrm O}(P)\quad\text{for every }g\in G_2^{\mathrm O}.
$
In any of the branches selected we have, $\mathcal C_2^{\mathrm O}(P)
\in G_2^{\mathrm O}\cdot P.$
Finally, if $B(P)=B(\overline P)$, the reconstruction argument at the
beginning of the proof shows that the two branches give the same canonical
cloud. Therefore $\mathcal C_2^{\mathrm O}(\cdot)$ is well defined and
is a canonicalization for $G_2^{\mathrm O}$.

The same argument applies when heights are retained as additional
attributes unchanged by planar transformations. The angular ordering
then uses $(\phi_i,a_i,\rho_i,h_i)$, the circular letters use
$(a_i,\rho_i,\delta_i,h_i)$, and the complete height sequence is
appended to $B(P)$. The final ordering uses
$(a_i,\widehat u_{i,1},\widehat u_{i,2},h_i)$.
Ranks are still computed from the original labels alone.
If every projected radius vanishes, the planar transformation is
the identity and the retained records are ordered by $(a_i,h_i)$.
\end{proof}
\begin{theorem}[\textnormal{\textbf{Correctness of 3D canonical maps}}]
\label{thm:three-dimensional-canonicalization}
\textit{The maps $\mathcal C_3^{\SO}(\cdot)$ and $\mathcal C_3^{\mathrm O}(\cdot)$ are
canonicalizations for $G_3^{\SO}$ and $G_3^{\mathrm O}$, respectively.}
\end{theorem}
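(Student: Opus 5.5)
We will reduce the statement to Lemma~\ref{lem:circular-canonicalization} applied to each projected cloud $P_p$, with the height $h$ treated as an additional attribute that planar transformations leave unchanged. The lemma's closing paragraph already covers this case. The argument has four parts, in order: well-definedness (independence of the choice of $A_p$), permutation invariance, invariance under $Q\in\SO(3)$ (respectively $\mathrm O(3)$), and membership of the output in the orbit. The degenerate case $r_\star=0$ is immediate: the output is the zero points ordered by label, which is invariant and lies in the orbit.

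\textbf{Independence of $A_p$.} For fixed $p$, any two rotations $A_p,A_p'\in\SO(3)$ sending $\xb_p/\|\xb_p\|$ to $\eb_3$ satisfy $A_p'A_p^{-1}\eb_3=\eb_3$. Hence $A_p'=R\,A_p$ with $R=\mathrm{diag}(R_\beta,1)$ and $R_\beta\in\SO(2)$. Consequently $P_p'$ is obtained from $P_p$ by a planar rotation: heights and labels are unchanged, and planar projections are rotated by $R_\beta$. By Lemma~\ref{lem:circular-canonicalization}, $\mathcal C_2^{\kappa}(P_p')=\mathcal C_2^{\kappa}(P_p)$, and after lifting, so is $Y_p^\kappa$. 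Thus each candidate $Y_p^\kappa$ depends only on $P$ and $p$.

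\textbf{Invariance.} Ranks, norms and labels are permutation and $\mathrm O(3)$ invariant, so $\Pi(\pi\cdot P)=\pi(\Pi(P))$ and $\Pi(QP)=\Pi(P)$, as sets of indices with the same points attached. For a permutation, the candidate for $\pi(p)$ in $\pi\cdot P$ is a permuted projected cloud, so its $Y$ agrees by the lemma's permutation invariance. For $Q\in\SO(3)$, the rotation $A_pQ^{-1}$ is admissible for $Q\xb_p$ and produces literally the same $P_p$, so $Y_p$ is unchanged. Each candidate set therefore coincides, and so does the minimum in \eqref{eq:canonicalization-so3}.

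The key step for $\mathrm O(3)$ is improper $Q$. We write $Q=Q_0F_3$ with $Q_0\in\SO(3)$ and $F_3=\mathrm{diag}(1,-1,1)$. The rotation $A_pQ_0^{-1}$ maps $Q\xb_p$ to $A_pF_3\xb_p$, which is not on $\eb_3$ in general. The better route is to conjugate: $F_3A_pF_3\in\SO(3)$ maps $F_3\xb_p$ to $F_3\eb_3=\eb_3$. Using the admissible rotation $F_3A_pF_3Q_0^{-1}$ for $Q\xb_p$, the projected cloud becomes $F_3A_p\xb_i$. This is $\overline{P_p}$, the planar reflection across the $X$ axis with heights unchanged. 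The $\mathrm O(2)$ invariance in Lemma~\ref{lem:circular-canonicalization}, with heights appended to $B$, then gives $\mathcal C_2^{\mathrm O}(\overline{P_p})=\mathcal C_2^{\mathrm O}(P_p)$. So every candidate is unchanged, and the minimum is unchanged.

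\textbf{Orbit membership.} This is the step I expect to need the most care, because lifting must not create a non-orbit element. The candidate $Y_p^\kappa$ is obtained by applying $A_p$, then the planar map from $\mathcal C_2^\kappa$ (an element of $\SO(2)$ or $\mathrm O(2)$), embedded as $\mathrm{diag}(\cdot,1)$ in $\SO(3)$ or $\mathrm O(3)$ and fixing heights, then restoring $h$ as the $z$-coordinate, then reordering rows. This composite is an element of $G_3^\kappa$ applied to $P$, so $Y_p^\kappa\in G_3^\kappa\cdot P$, and the minimum over $p$ is one of them. Finally, ties in the minimum, and ties among equal records within the ordering by $(a,x,y,z)$, are harmless because they produce identical sequences. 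This completes both canonicalization properties of Definition~1.
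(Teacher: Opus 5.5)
Your proof is correct and follows the paper's proof essentially step for step. Both handle the $r_\star=0$ case, prove independence of $A_p$ because the stabilizer of $\eb_3$ is $\mathrm{diag}(H,1)$, reduce invariance to the height-augmented Lemma~\ref{lem:circular-canonicalization}, and obtain orbit membership by lifting the planar transformation. The paper handles all $(\pi,Q)$ at once through $S=A'_{p'}QA_p^\top=\mathrm{diag}(H,1)$ with $\det H=\det Q$, whereas you pick explicit admissible rotations ($A_pQ^{-1}$, resp.\ $F_3A_pF_3Q_0^{-1}$) that make the projected cloud literally $P_p$ or $\overline{P_p}$ and then invoke your independence-of-$A_p$ step; this cosmetic difference is equally valid.
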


\begin{proof}
We verify that the maps are well defined, invariant under the corresponding
group actions, and return a representative from the appropriate orbit.

Let $r_\star$ and $\Pi(P)$ be as defined in Eq.~\eqref{eq:candidate-poles}. First we suppose that $r_\star=0$. Then every point lies at the origin, and both
maps return the zero points ordered by their labels. This output is unchanged
by rotations, reflections, and row permutations, and differs from $P$ only by
a row permutation. The result therefore holds in this case. Henceforth, assume
$r_\star>0$.

\paragraph{Independence of pole-alignment rotation:}
For a fixed $p\in\Pi(P)$, suppose that $A_p,\widetilde A_p\in\SO(3)$ both align the pole with $\eb_3$. Thus we have $A_p\frac{\xb_p}{\|\xb_p\|}=\widetilde A_p\frac{\xb_p}{\|\xb_p\|}=\eb_3.$ The matrix $T=\widetilde A_pA_p^\top$ belongs to $\SO(3)$ and fixes $\eb_3$. Since $T$ is orthogonal, it preserves the orthogonal complement $\eb_3^\perp$. Hence, with respect to
$\mathbb R^3=\eb_3^\perp\oplus\operatorname{span}\{\eb_3\}$, it has the form
$T=\begin{pmatrix}
H&0\\
0&1
\end{pmatrix}$
for some $H\in\SO(2)$. Therefore, the projected clouds obtained from $A_p$
and $\widetilde A_p$ differ only by the planar rotation $H$, their heights
and labels being identical. The height-augmented result of
Lemma~\ref{lem:circular-canonicalization}, using the unchanged
original-label ranks, therefore gives the same planar output.
Restoring heights and ordering by $(a,x,y,z)$ gives the same
$Y_p^{\SO}$ and $Y_p^{\mathrm O}$ for either choice of $A_p$.

\paragraph{Invariance to $\permn$ and spacial transformations:}
Let $P'=(\pi,Q)\cdot P$, and write
$P'=\{(\xb'_j,\ell'_j)\}_{j=1}^n$. By the group action, $\xb'_{\pi(i)}=Q\xb_i,\,\ell'_{\pi(i)}=\ell_i.$
Since $Q$ is orthogonal, we get $\|\xb'_{\pi(i)}\|=\|Q\xb_i\|=\|\xb_i\|.$
The multiset of original labels is unchanged, so the rarity ranks
satisfy $a'_{\pi(i)}=a_i$ for every $i$.
Thus $P$ and $P'$ have the same maximum radius and the same
smallest rarity rank among the farthest points.
Consequently, $\Pi(P')=\pi(\Pi(P))$, and
$p\mapsto\pi(p)$ is a bijection between the pole sets.
Now, we fix $p\in\Pi(P)$ and let $p'=\pi(p)$. Choose proper rotations
$A_p,A'_{p'}\in\SO(3)$ satisfying $A_p\frac{\xb_p}{\|\xb_p\|}=\eb_3,\,A'_{p'}\frac{\xb'_{p'}}{\|\xb'_{p'}\|}=\eb_3.$ Define $S\coloneqq A'_{p'}QA_p^\top$. Since
$\xb'_{p'}=Q\xb_p$ and orthogonal transformations preserve norms, we get:
\begin{align*}
S\eb_3=A'_{p'}QA_p^\top\eb_3=
A'_{p'}Q\frac{\xb_p}{\|\xb_p\|}=
A'_{p'}\frac{\xb'_{p'}}{\|\xb'_{p'}\|}=
\eb_3~.
\end{align*}
Thus, $S$ is orthogonal and fixes $\eb_3$. Its restriction to
$\eb_3^\perp$ is therefore an element $H\in\mathrm O(2)$, and we can write
$S=
\begin{pmatrix}
H&0\\
0&1
\end{pmatrix}.$
Moreover, because $A_p,A'_{p'}\in\SO(3)$, we have $\det H=\det S=\det(A'_{p'})\det(Q)\det(A_p^\top)=\det Q.$
Consequently, $H\in\SO(2)$ when $Q\in\SO(3)$, and similarly $H\in\mathrm O(2)$ when $Q\in\mathrm O(3)$.

For every $i$, we write $A_p\xb_i=(\ub_i^{(p)},h_i^{(p)})$. Then we have:
\begin{align*}
A'_{p'}\xb'_{\pi(i)}=A'_{p'}Q\xb_i=SA_p\xb_i=
\bigl(H\ub_i^{(p)},h_i^{(p)}\bigr)~.
\end{align*}
Together with $\ell'_{\pi(i)}=\ell_i$ and $a'_{\pi(i)}=a_i$,
this shows that corresponding projected points have identical
heights, original labels, and rarity ranks.
Their planar coordinates differ by $H$, and their rows differ by $\pi$.

If $Q\in\SO(3)$, then $H\in\SO(2)$. Applying Lemma~\ref{lem:circular-canonicalization} therefore gives $D_{\pi(p)}^{\SO}(P')=D_p^{\SO}(P)\,\forall\,p\in\Pi(P).$ Since $p\mapsto\pi(p)$ is a bijection between the pole sets, we get
$\bigl\{D_{p'}^{\SO}(P'):p'\in\Pi(P')\bigr\}=\bigl\{D_p^{\SO}(P):p\in\Pi(P)\bigr\}.$
Taking the lexicographic minimum and using Eq.~\eqref{eq:canonicalization-so3} yields $\mathcal C_3^{\SO}(P')=\mathcal C_3^{\SO}(P).$ If $Q\in\mathrm O(3)$, then $H\in\mathrm O(2)$. The same argument using
$\mathcal C_2^{\mathrm O}(\cdot)$ gives $D_{\pi(p)}^{\mathrm O}(P')=D_p^{\mathrm O}(P)\,\forall\,p\in\Pi(P),$
and hence, by Eq.~\eqref{eq:canonicalization-o3}, we get $\mathcal C_3^{\mathrm O}(P')=\mathcal C_3^{\mathrm O}(P).$
This proves invariance under $G_3^{\SO}$ and $G_3^{\mathrm O}$,
respectively.

\paragraph{Orbit membership.} We fix $p\in\Pi(P)$. Since $\mathcal C_2^{\SO}(\cdot)$ is a canonicalization,
its output is obtained from $P_p$ by a planar rotation $H_p\in\SO(2)$ and a row permutation. Lifting $H_p$ to $\R^3$ gives
$
\widetilde H_p=
\begin{pmatrix}
H_p&0\\
0&1
\end{pmatrix}
\in\SO(3).$
We denote the corresponding 3D transformation as $\widetilde H_pA_p\in\SO(3)$. The operation $L$ introduces at most another row permutation. Therefore, $\exists\,\sigma_p\in\permn$ such that $D_p^{\SO}(P)=(\sigma_p,\widetilde H_pA_p)\cdot P$, implying $D_p^{\SO}(P)\in G_3^{\SO}\cdot P$.

Similarly, the output of $\mathcal C_2^{\mathrm O}(\cdot)$ is obtained using some
$H_p\in\mathrm O(2)$. Its lift to $\R^3$ gven by $\widetilde H_p$ lies in $\mathrm O(3)$,
and therefore $\widetilde H_pA_p\in\mathrm O(3)$. After combining all row
reorderings into a permutation $\sigma_p\in\permn$, we obtain $D_p^{\mathrm O}(P)=(\sigma_p,\widetilde H_pA_p)\cdot P.$ Hence $D_p^{\mathrm O}(P)\in G_3^{\mathrm O}\cdot P$.

The pole set $\Pi(P)$ is nonempty and finite. Consequently, each
lexicographic minimum in Eq.~\eqref{eq:canonicalization-so3} and Eq.~\eqref{eq:canonicalization-o3} is attained by at least one pole. It follows
that $\mathcal C_3^{\SO}(P)\in G_3^{\SO}\cdot P,\, \mathcal C_3^{\mathrm O}(P)\in G_3^{\mathrm O}\cdot P.$
If several poles attain the minimum, their sorted rank-coordinate
keys agree. Since distinct original labels present in $P$ have
distinct rarity ranks, their returned labeled records also agree.
Thus the minimum is well defined.
Points with zero planar projection remain fixed under every
$H\in\mathrm O(2)$, with their heights and labels retained.
If all planar projections vanish, the planar transformation is
the identity and the final ordering is determined by $(a_i,h_i)$.

Thus, each map is invariant under its stated group and returns a point in the
corresponding orbit. By Definition~\ref{def:canonicalization},
$\mathcal C_3^{\SO}(\cdot)$ and $\mathcal C_3^{\mathrm O}(\cdot)$ are canonicalizations
for $G_3^{\SO}$ and $G_3^{\mathrm O}$, respectively.
\end{proof}
Now, we prove correctness of \our algorithm in computing the maps $\mathcal C_3^\kappa(\cdot),\,\kappa\in\{\SO,\mathrm O\}$ in the following theorem.

\begin{theorem}[\textnormal{\textbf{Correctness of \our algorithm}}]
\label{thm:algorithm-correctness}
\textit{Algorithm~\ref{alg:linear-cyclic-shift} returns a least cyclic shift of
$W$. Consequently, when used in the planar constructions Eq.~\eqref{eq:circular-canonicalization-so} and Eq.~\eqref{eq:circular-canonicalization-o}, it computes
$\mathcal C_2^{\SO}(\cdot)$ and $\mathcal C_2^{\mathrm O}(\cdot)$, respectively. Algorithm~\ref{alg:three-dimensional-canonicalization} computes
$\mathcal C_3^{\SO}(\cdot)$ when $\kappa=\SO$ and
$\mathcal C_3^{\mathrm O}(\cdot)$ when $\kappa=\mathrm O$.
Hence the algorithms compute the canonicalizations established in Lemma~\ref{lem:circular-canonicalization} and Theorem~\ref{thm:three-dimensional-canonicalization}.}
\end{theorem}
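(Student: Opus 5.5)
The proof reduces to one genuinely nontrivial claim: Algorithm~\ref{alg:linear-cyclic-shift} returns an index $s$ with $\tau_sW=\min_r\tau_rW$. Everything else follows by matching the steps of Algorithms~\ref{alg:canonicalization-overview} and~\ref{alg:three-dimensional-canonicalization} line by line against Eqs.~\eqref{eq:circular-canonicalization-so}, \eqref{eq:circular-canonicalization-o}, \eqref{eq:canonicalization-so3} and~\eqref{eq:canonicalization-o3}. Since the letters $\omega_j$ are tuples compared lexicographically, they form a totally ordered alphabet, so the classical two-pointer (minimum-expression) argument applies verbatim.

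For the cyclic-shift claim, I would prove the invariant
\[
\text{every } r<\max\{i,j\} \text{ with } r\notin\{i,j\} \text{ satisfies } \tau_rW>\min_q\tau_qW \text{ or } \tau_rW=\tau_{q}W \text{ for some } q\in\{i,j\}.
\]
The key lemma is the following. Suppose $\omega_{i+k}=\omega_{j+k}$ for $k<t$ and $\omega_{i+t}<\omega_{j+t}$. Then for each $0\le k\le t$ we have $\tau_{j+k}W>\tau_{i+k}W$, so none of the starts $j,\dots,j+t$ is a strict minimizer beyond what $i+k$ already covers. The $i+k$ for $k\le t$ either coincide with already-discarded starts or are dominated in turn. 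Hence advancing $j$ to $j+t+1$, and then to $i+1$ if it would fall to $i$ or below, preserves the invariant; the other branch is symmetric.

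If instead $t$ reaches $m$, then $\tau_iW=\tau_jW$ and both are minimal among all surviving candidates, so $\min\{i,j\}$ is a valid answer. The loop terminates because $i+j+t$ strictly increases and is bounded by $O(m)$. When it exits with $i\ge m$ or $j\ge m$, all starts except the remaining pointer have been eliminated, so that pointer is a least shift. The care needed to state this elimination invariant precisely, including the wrap-around indices and the reset $j\gets i+1$ or $i\gets j+1$, is the main obstacle. I would follow the standard proof for Booth-type algorithms and note that only the total order of letters is used.

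Given a least shift $s$, the $\SO(2)$ branch of the algorithm computes exactly $\phi_\star=\phi_{i_s}$, the rotated coordinates $\widehat u_i$ and the sort by $(a_i,\widehat u_{i,1},\widehat u_{i,2})$, which is Eq.~\eqref{eq:circular-canonicalization-so}. Any other least shift yields the same output by the well-definedness part of Lemma~\ref{lem:circular-canonicalization}. For $\mathrm O(2)$, the algorithm runs the same procedure on $P$ and $\overline P$, forms $B(P)$ and $B(\overline P)$, and selects as in Eq.~\eqref{eq:circular-canonicalization-o}. Ties are harmless, again by Lemma~\ref{lem:circular-canonicalization}.

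In 3D, Algorithm~\ref{alg:three-dimensional-canonicalization} handles $r_\star=0$ exactly as the definition does. Otherwise it enumerates $\Pi(P)$ from Eq.~\eqref{eq:candidate-poles} and builds $P_p$ for some aligning $A_p$; the independence of this choice was shown in the proof of Theorem~\ref{thm:three-dimensional-canonicalization}. It then applies the verified planar routine with heights appended, restores heights, and orders by $(a,x,y,z)$ to obtain $Y_p^\kappa$. Finally, a running minimum over a finite nonempty set returns $\min_p Y_p^\kappa$, since the update $D_p<D_{\mathrm{best}}$ keeps a lexicographic minimum at each step. This matches Eqs.~\eqref{eq:canonicalization-so3} and~\eqref{eq:canonicalization-o3}, and combined with Lemma~\ref{lem:circular-canonicalization} and Theorem~\ref{thm:three-dimensional-canonicalization} it completes the proof.
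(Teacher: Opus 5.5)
Your overall plan matches the paper's proof. It has the same ingredients: an elimination lemma for the two-pointer scan, an invariant on the starts below $\max\{i,j\}$, and a case analysis at exit. It then matches the planar and 3D routines to Eqs.~\eqref{eq:circular-canonicalization-so}--\eqref{eq:canonicalization-o3}, handling ties via Lemma~\ref{lem:circular-canonicalization}, independence of $A_p$ via Theorem~\ref{thm:three-dimensional-canonicalization}, and the running minimum by induction. Those outer parts are fine.

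\textbf{The gap is in the exit case $t=m$.} Your invariant only constrains starts $r<\max\{i,j\}$. When the loop stops because $t$ reached $m$ (with $i,j<m$), the starts $r$ with $\max\{i,j\}<r<m$ have never been examined as starting positions. So ``$\tau_iW=\tau_jW$ and both are minimal among all surviving candidates'' does not follow. What you actually know is minimality against the starts below $\max\{i,j\}$, not against all $m$ rotations.

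The missing idea is periodicity. If, say, $i<j$ and $\tau_iW=\tau_jW$, then $d=j-i$ is a cyclic period of $W$, i.e.\ $\omega_{r+d}=\omega_r$ for all $r$ (indices mod $m$). Hence $\tau_rW=\tau_{r-d}W$, and every start $r\ge j$ reduces to a start in $\{i,\dots,j-1\}$, which the invariant does control. Only then can you conclude that $i=\min\{i,j\}$ is a least shift; this is exactly the step the paper supplies. Concretely, for $W=(a,b,a,b)$ with $a<b$, the scan exits with $i=0$, $j=2$, $t=4$. Start $3$ is never examined as a starting position. It is ruled out only because $\tau_3W=\tau_1W$ and start $1$ was eliminated.

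\textbf{Smaller points in the same part.}
\begin{itemize}
\item Your elimination lemma is cleaner than your gloss on it. The chain $\tau_{j+k}W>\tau_{i+k}W\ge\min_q\tau_qW$ excludes each $j+k$ ($0\le k\le t$) outright; nothing about the status of the starts $i+k$ is needed.
\item The reset $j\gets i+1$ when $j+t+1\le i$ needs a sentence you omit. The skipped starts $j+t+1,\dots,i-1$ lie below the old maximum $i$ and are not candidates, so the old invariant already disposed of them.
\item The extra disjunct ``$\tau_rW=\tau_qW$ for some $q\in\{i,j\}$'' makes your invariant weaker than needed. Ties with a pointer that is later discarded must be re-checked. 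At the exit $j\ge m$, a tie with the out-of-range pointer (read mod $m$) does not by itself show that $i$ is a least shift. The sharper invariant $\mathcal M\cap\{0,\dots,\max\{i,j\}-1\}\subseteq\{i,j\}$, with $\mathcal M$ the set of least starts, avoids both issues. It is what the paper maintains, and the same elimination lemma preserves it.
\end{itemize}
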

\begin{proof}
We use exact arithmetic and exact comparisons.
The entries of $W=(\omega_0,\ldots,\omega_{m-1})$ belong to a
totally ordered set. The scan receives a nonempty word.
The case $m=1$ is immediate, so assume $m\geq2$. We extend $W$ periodically by defining, for every $r\in\mathbb Z$, $
\widetilde\omega_r \coloneqq\omega_{r\bmod m}~,~ R_r \coloneqq\bigl(\widetilde\omega_r,\widetilde\omega_{r+1},
\ldots,\widetilde\omega_{r+m-1}\bigr)~.$
Thus $R_r=\tau_{r\bmod m}W$ and $R_{r+m}=R_r$. Let $\mathcal M
\coloneqq
\left\{
r\in\{0,\ldots,m-1\}:
R_r=\min_{0\leq q<m}R_q
\right\}$
be the nonempty set of least starting positions.

Suppose that the words beginning at positions $a$ and $b$ agree for their
first $t$ entries and then satisfy $\widetilde\omega_{a+q}=\widetilde\omega_{b+q}~\text{for}~0\leq q<t~,\,\widetilde\omega_{a+t}<\widetilde\omega_{b+t}$. Then, for every $0\leq h\leq t$, we have $R_{a+h}<R_{b+h}~.$
Indeed, for $0\leq q<t-h$, we get $\widetilde\omega_{a+h+q}=\widetilde\omega_{b+h+q}~,$
whereas at the next position the two words contain
$\widetilde\omega_{a+t}$ and $\widetilde\omega_{b+t}$, respectively.
The first word is therefore lexicographically smaller. Consequently, every
valid starting position among $b,b+1,\ldots,b+t$ is excluded from
$\mathcal M$. The symmetric statement holds when
$\widetilde\omega_{a+t}>\widetilde\omega_{b+t}$.

We now establish the following loop invariants:

\begin{enumerate}
    \item The two candidate indices are distinct.
    \item If $i,j<m$, then $R_i$ and $R_j$ agree in their first $t$ entries.
    \item Every least start below $\max\{i,j\}$ is represented by one of the
    two candidates:
    \begin{align*}
    \mathcal M\cap
    \{0,\ldots,\max\{i,j\}-1\}
    \subseteq
    \{i,j\}~.
    \end{align*}
\end{enumerate}

These properties hold initially because $i=0$, $j=1$, and $t=0$. Suppose
they hold at the beginning of an iteration. If $u=v$, increasing $t$ by one
preserves all three properties.

Suppose that $u<v$. The elimination claim shows that none of
$j,j+1,\ldots,j+t$ belongs to $\mathcal M$. Let
$j'=j+t+1$. If $j'>i$, replacing $j$ by $j'$ skips exactly this excluded
block. If $j'\leq i$, the algorithm instead sets $j=i+1$. Every position
between $j'$ and $i-1$ was already below the previous maximum and, by the
third invariant, was already excluded; the position $i$ itself remains
represented by the other pointer. Thus, the update skips no member of
$\mathcal M$. Resetting $t$ to zero makes the second invariant immediate,
and the updated indices remain distinct. The case $u>v$ is symmetric.
Therefore, all three invariants hold throughout the algorithm.

It remains to examine termination. First we suppose that one pointer reaches
$m$. For example, let $j\geq m$ and $i<m$. Since every valid start lies
below $j$, the third invariant gives $\mathcal M\subseteq\{i\}~.$
Because $\mathcal M$ is non-empty, $i\in\mathcal M$. Hence
$\min\{i,j\}=i$ is a least starting position. The case $i\geq m$ is
symmetric.

Now suppose that the algorithm terminates because $t=m$. Then $i,j<m$ and
$R_i=R_j$. Assume without loss of generality that $i<j$, and set
$d=j-i$. Equality of these two length-$m$ words implies
\begin{align*}
\widetilde\omega_{r+d}
=
\widetilde\omega_r~\forall r\in\mathbb Z~.
\end{align*}
Thus, $d$ is a cyclic period of $W$, and $R_{r+d}=R_r$ for every $r$.

Choose any $r\in\mathcal M$. If $r<j$, the third invariant implies $r=i$.
If $r\geq j$, repeatedly subtracting $d$ gives an index
$r_0\in\{i,\ldots,j-1\}$ such that $R_{r_0}=R_r$. Hence
$r_0\in\mathcal M$, and the third invariant again gives $r_0=i$.
Therefore $R_i$ is a least cyclic shift. Since $R_i=R_j$, both candidates
are valid, and the returned index $\min\{i,j\}=i$ belongs to $\mathcal M$.
The case $j<i$ is symmetric.

We have therefore proved that Algorithm~\ref{alg:linear-cyclic-shift}
returns an index $s$ satisfying
\begin{align*}
\tau_sW=\min_{0\leq r<m}\tau_rW~.
\end{align*}

Next, we connect the procedure to the planar maps. Applied to $W(P)$, it
returns an index satisfying Eq.~\eqref{eq:attributed-least-shift}, and hence
computes the starting position used by $\mathcal C_2^{\SO}(\cdot)$. If two indices
$s$ and $s'$ both attain the minimum, then
$\tau_sW(P)=\tau_{s'}W(P)$. The shifted word records the rarity ranks, radii, angular gaps,
and retained heights when present.
The ranks identify the original labels among those present in $P$. Once the first point
is placed at angle $\pi/2$, the remaining angles are reconstructed by
\begin{align*}
\widehat\phi_t
=
\frac{\pi}{2}
+
\sum_{q=0}^{t-1}\delta_{(s+q)\bmod m}
\pmod{2\pi}~.
\end{align*}
Thus tied least shifts reconstruct the same non-origin records. Including the fixed origin records, converting to Cartesian coordinates, and applying the prescribed final ordering gives the same canonical output.

For $\mathcal C_2^{\mathrm O}$, the scan is applied separately to
$W(P)$ and $W(\overline P)$.
The returned starts determine the channelwise keys $B(P)$ and
$B(\overline P)$ in \eqref{eq:attributed-orientation-key}.
Selecting the candidate with the smaller key gives exactly
\eqref{eq:circular-canonicalization-o}.

If $m=0$, no cyclic start is required.
The standalone zero cloud is ordered by its original labels;
the height-augmented construction retains all heights and orders
the records by $(a_i,h_i)$.
Thus the planar constructions compute
$\mathcal C_2^{\SO}$ and $\mathcal C_2^{\mathrm O}$ exactly.

Finally, consider Algorithm~\ref{alg:three-dimensional-canonicalization}.
If $r_\star=0$, its output agrees with the separately defined all-zero case.
Otherwise, it computes the original-label ranks once and constructs
$\Pi(P)$ from \eqref{eq:candidate-poles}.
For each $p\in\Pi(P)$, it forms
$P_p=((\ub_i^{(p)},\ell_i,h_i^{(p)}))_{i=1}^n$,
retaining the previously computed ranks.
It applies the height-augmented planar construction, restores the
heights, and orders the lifted records by $(a,x,y,z)$ to obtain
$Y_p^\kappa$. It then sets $D_p^\kappa=Y_p^\kappa$.
Candidate comparisons use the associated rank-coordinate keys.

The candidate does not depend on the particular proper rotation chosen to
align the pole: two such rotations differ, after alignment, by an element
of $\SO(2)$ acting on the projection, which is removed by either planar
canonicalization. After processing any collection of poles,
$D_{\mathrm{best}}$ is, by induction, the lexicographic minimum of the
candidates processed so far. Hence, the final output is
\begin{align*}
\min_{p\in\Pi(P)}Y_p^{\SO}
\quad\text{or}\quad
\min_{p\in\Pi(P)}Y_p^{\mathrm O}.
\end{align*}
which is precisely Eq.~\eqref{eq:canonicalization-so3} or Eq.~\eqref{eq:canonicalization-o3}, respectively.

Therefore, the algorithms compute the maps established as canonicalizations
in Lemma~\ref{lem:circular-canonicalization} and Theorem~\ref{thm:three-dimensional-canonicalization}.
\end{proof}
In the following result, we analyze worst case runtime complexity of \our.
\begin{theorem}[\textnormal{\textbf{Runtime analysis of \our algorithm}}]
Assume that the number of labels is fixed and that arithmetic and
label comparisons take constant time. For a word of length $m$, Algorithm~\ref{alg:linear-cyclic-shift} runs in $O(m)$ time and uses
$O(1)$ auxiliary memory after $W$ has been stored. The complete planar
canonicalizations $\mathcal C_2^{\SO}(\cdot)$ and $\mathcal C_2^{\mathrm O}(\cdot)$
require $O(n\log n)$ time and $O(n)$ auxiliary memory for a point cloud $P$ of size $n$. Let $k=|\Pi(P)|$, Algorithm~\ref{alg:three-dimensional-canonicalization}
runs in $O\bigl(k n\log n\bigr)$ time for $n\geq2$. Since $k\leq n$, its worst-case running time is
$O(n^2\log n)$. It uses $O(n)$ auxiliary memory and the bounds hold for both $\SO$ and
$\mathrm O$ variants.
\label{thm:algorithm-runtime}
\end{theorem}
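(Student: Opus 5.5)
The proof splits into three layers: the cyclic-shift scan, the planar canonicalization, and the outer loop over poles. For Algorithm~\ref{alg:linear-cyclic-shift}, I would use a potential argument. Each iteration either increases $t$ by one, or advances one of the pointers $i,j$ by at least $t+1$ and resets $t$ to zero. So I take the potential $\Phi=i+j+t$ and show that it strictly increases every iteration. In the equality branch it increases by one. In the $u<v$ branch the new $j$ is at least $j+t+1$ while $t$ drops to $0$, for a net gain of at least one; the branch $u>v$ is symmetric. The loop condition gives $i,j,t<m$ while the loop runs, so $\Phi<3m$ and there are at most $3m$ iterations. Each iteration does a constant number of letter comparisons. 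Since the number of labels is fixed and arithmetic takes constant time, each comparison of tuples $(a,\rho,\delta,h)$ costs $O(1)$, giving $O(m)$ time. The auxiliary memory is only the indices $i,j,t$ and the two letters $u,v$, i.e.\ $O(1)$.

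For the planar maps, I would walk through the construction in Section~\ref{subsec:circular-canonicalization} and charge each step. Computing the rarity ranks takes one counting pass and a sort of the $O(1)$ distinct labels present. Computing polar coordinates is $O(n)$. Sorting by $(\phi_i,a_i,\rho_i,h_i)$ costs $O(n\log n)$. Building the gaps $\delta_j$ and the word $W$ is $O(n)$. The least shift then costs $O(m)=O(n)$ by the first part. Reconstructing $\widehat\phi_t$ by prefix sums, rotating, and converting to Cartesian form is $O(n)$, and the final sort by $(a,\widehat u_1,\widehat u_2,h)$ is $O(n\log n)$. For $\mathcal C_2^{\mathrm O}$, the whole pipeline runs twice, on $P$ and on $\overline P$. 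Forming each key $B(P)$ is $O(n)$, and comparing $B(P)$ with $B(\overline P)$ lexicographically is $O(n)$. The total is therefore $O(n\log n)$ time, with $O(n)$ memory for the arrays.

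For Algorithm~\ref{alg:three-dimensional-canonicalization}, I would proceed in the same way. Computing the radii, $r_\star$, the ranks and $\Pi(P)$ is $O(n)$. For each of the $k$ poles the algorithm does the following. It builds $A_p$ in $O(1)$, for instance by a Rodrigues rotation. It applies $A_p$ to all points in $O(n)$. It runs the planar canonicalization in $O(n\log n)$. It lifts the heights and sorts by $(a,x,y,z)$ in $O(n\log n)$. Finally it compares with $D_{\mathrm{best}}$ in $O(n)$. Summing over the poles gives $O(kn\log n)$, and $k\le n$ since $\Pi(P)\subseteq\{1,\dots,n\}$, which yields $O(n^2\log n)$. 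The memory stays $O(n)$ because only the current candidate and $D_{\mathrm{best}}$ are stored at any time, and the per-pole workspace is reused.

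The only delicate step is the first one: making sure that the pointer resets $j\gets i+1$ (respectively $i\gets j+1$) still increase $\Phi$. When the reset fires we have $j+t+1\le i$, so the new value $i+1>j+t$ still exceeds the old $j+t$, and the potential argument goes through. Everything else is routine bookkeeping, apart from noting that exact comparisons cost $O(1)$ per letter under the stated assumptions.
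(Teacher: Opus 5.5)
Your proposal is correct and follows the paper's proof essentially step for step. It uses the same potential $\Phi=i+j+t<3m$ for the scan, the same charging of sorts and linear passes for the planar maps (doubled for the reflected cloud in the $\mathrm{O}(2)$ case), and the same $O(n\log n)$-per-pole summation with reused workspace to get $O(kn\log n)$ time and $O(n)$ memory. The differences are cosmetic: you charge the rank computation $O(n)$ rather than $O(n\log n)$, which is valid under the fixed-label assumption, and you do not mention the all-zero case $r_\star=0$, which costs only a label sort and is covered by the same bound.
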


\begin{proof}
We assume that all coordinate dimensions and the number of labels are fixed. Hence each arithmetic operation, label comparison etc. takes
constant time.

We first analyze Algorithm~\ref{alg:linear-cyclic-shift}. At the beginning
of an iteration, define $\Phi\coloneqq i+j+t.$
If the compared entries are equal, the algorithm replaces $t$ by $t+1$,
while $i$ and $j$ remain unchanged. Hence $\Phi$ increases by one.

Suppose instead that $u<v$. Before the optional correction enforcing
$j>i$, the update satisfies $j_{\mathrm{new}}\geq j+t+1,\,t_{\mathrm{new}}=0.$
Consequently, we get:
\begin{align*}
\Phi_{\mathrm{new}}-\Phi
&=
i+j_{\mathrm{new}}+t_{\mathrm{new}}-(i+j+t)\\
&\geq
(j+t+1)-j-t
=
1~.
\end{align*}
The optional assignment $j\gets i+1$ can only increase $j$ further. The
case $u>v$ is symmetric. Therefore, $\Phi$ increases by at least one in
every iteration.

Whenever the loop condition is satisfied, $i<m$, $j<m$, and $t<m$.
Therefore, $\Phi=i+j+t<3m.$ Since $\Phi$ is integer-valued and increases by at least one per iteration, the loop performs at most $O(m)$ iterations. Each iteration uses a constant number of arithmetic operations, comparisons, and assignments. Thus, the running time is $O(m)$.

After $W$ has been stored, the procedure maintains only the indices
$i,j,t$ and the two currently compared entries $u,v$. Because each word
entry has fixed size, this requires $O(1)$ auxiliary memory. Storing the word
itself requires $O(m)$ memory, but this storage is excluded from the stated
auxiliary bound for the scan.

We next consider a planar cloud with $n$ points.
Computing the rarest-first ranks of the original labels takes
$O(n\log n)$ time. Let $m=|\{i:\rho_i>0\}|$.
Computing the radii and the angles of non-origin points takes
$O(n)$ time. Ordering those points by $(\phi_i,a_i,\rho_i)$
costs $O(n\log n)$ time; appending a retained height does not
change this bound. The cases $n\leq1$ take constant time.

Once this ordering has been computed, constructing the angular increments
and the circular word costs $O(m)$. By the first part of the proof, selecting
its least cyclic shift also costs $O(m)$. Constructing the canonical angles, transforming the points, and
forming the orientation key each require $O(n)$ time.
Sorting the final Cartesian records by $(a_i,x_i,y_i)$,
with height appended when retained, costs $O(n\log n)$.
The standalone all-zero case only sorts the original labels,
with the same bound. Consequently, $T_2^{\SO}(n)=O(n\log n)$.
The $\mathrm O(2)$ construction repeats the circular construction for the
reflected cloud and compares the two resulting keys. This performs at most
a constant multiple of the work in the $\SO(2)$ case, so $T_2^{\mathrm O}(n)=O(n\log n).$
The angular ordering, circular word, reflected candidate, and output can all
be stored using arrays of length at most $n$. Thus either planar construction
uses $O(n)$ auxiliary memory.

Finally, consider Algorithm~\ref{alg:three-dimensional-canonicalization}
for $r_\star>0$.
The original-label ranks are computed once in $O(n\log n)$ time.
Given these ranks, computing the radii, maximum radius, minimum
admissible rank, and pole set $\Pi(P)$ takes $O(n)$ time.
Let $k=|\Pi(P)|$. The planar constructions reuse these ranks.

Fix a pole $p\in\Pi(P)$. Since the ambient dimension is fixed, constructing
$A_p$ takes constant time. Applying $A_p$ to all points and constructing
$P_p$ takes $O(n)$ time. By the planar bound, canonicalizing $P_p$ takes
$O(n\log n)$ time. Restoring the heights takes $O(n)$ time.
Ordering the lifted records by $(a,x,y,z)$ to obtain
$Y_p^\kappa=D_p^\kappa$ costs $O(n\log n)$.
Comparing its rank-coordinate key with that of the current best
candidate takes $O(n)$ time in the worst case. The total work for one pole is
therefore $O(n)+O(n\log n)+O(n)+O(n\log n)+O(n)=O(n\log n).$

Summing over all $k$ poles and including the initial rank computation
gives
\begin{align*}
T_3(n,k)
=
O(n\log n)+k\,O(n\log n)
=
O(kn\log n).
\end{align*}
 Since $k\leq n$, we get $T_3(n,k)=O(n^2\log n)$
in the worst case. If the admissible pole is unique, the sharper bound is
$O(n\log n)$.

The all-zero case requires only ordering the labels and therefore costs
$O(n\log n)$ time, which is covered by the stated worst-case bound.

For memory, the algorithm only needs to retain the input, the current projected
cloud, its circular representation, the current lifted candidate, and the
best candidate found so far. Each contains $O(n)$ records. Standard sorting
workspaces also use at most $O(n)$ memory. Because candidates are processed
one pole at a time rather than stored simultaneously, the auxiliary memory
is $O(n)$ rather than $O(kn)$.

The $\mathrm O(2)$ and $\mathrm O(3)$ variants evaluate one additional reflected planar
orientation. This multiplies the planar work by at most a constant and does
not change any asymptotic time or memory bound.
\end{proof}

Now we proceed to show that \our is Lipschitz almost everywhere. We first define some notations. We fix the label sequence $\ab=(\ab_1,\ldots,\ab_n)$ and let 
$$\mathcal X_{\ab}=\left\{((\xb_i,\ab_i))_{i=1}^n:\sum_{i=1}^n\xb_i=0\right\}$$  be the corresponding space of centered point clouds. For $P=((\xb_i,\ab_i))_{i=1}^n$ and
$P'=((\xb_i',\ab_i))_{i=1}^n$ in $\mathcal X_{\ab}$, we define 
\begin{align*}
d_\infty(P,P')\coloneqq
\max_{1\leq i\leq n}\|\xb_i-\xb_i'\|_2,\quad\mathcal B_\infty(P_0,\delta)\coloneqq
\{P\in\mathcal X_{\ab}:d_\infty(P,P_0)<\delta\}.
\end{align*}
Thus $d_\infty$ measures the largest displacement of corresponding
points, without optimizing over rotations or permutations while $\mathcal B_\infty(P_0,\delta)$ contains the centered clouds in which no point of $P_0$ is displaced by $\delta$ or more.

For this theorem, $\widehat{\mathcal C}_{3,\eta}^{\kappa}$
denotes the construction evaluated in real arithmetic, retaining
its grid comparisons and deterministic selection rules, with
unbounded integer comparison keys.
For $\eta=0$, it denotes the exact construction without quantization.
The theorem does not account for floating-point roundoff or
machine-range restrictions.

\thmrobustness*

\begin{proof}
Fix $\eta\geq0$. We equip $\mathcal X_{\ab}$ with its intrinsic
Lebesgue measure as a finite-dimensional Euclidean space. We first prove that there is a countable collection of open sets
$\{U_j\}_{j\geq1}$ such that their complement has a measure of zero and, on each
$U_j$, the algorithm makes the same choices. 

For a given $n\in\N$, there is a finite upper bound $N(n)$ on the number of comparisons made in an execution of \our. We prove inductively that, after each decision, $\mathcal X_{\ab}$ is covered, up to a set of measure zero, by countably many connected open sets on which all decisions made so far are fixed and all continuously varying quantities computed so far are real-analytic. 

If $n=1$, the centered output is constant.
Assume $n\geq2$ and exclude
\begin{align*}
E_0
\coloneqq
\bigcup_{i=1}^n
\{P\in\mathcal X_{\ab}:\xb_i=0\}.
\end{align*}
Each set in this union is a proper linear subspace, so
$\mu(E_0)=0$. Every possible pole is nonzero outside $E_0$.

We use the following general finite-decision argument. Consider a deterministic procedure with finite decision depth satisfying two properties. First, between consecutive decisions, its continuously varying quantities are updated by real-analytic maps on open domains. Second, each decision is determined by the position of a real-analytic scalar quantity $h(P)$ relative to a locally finite set of thresholds $T\subset\R$.

Let $U$ be a connected open region on which all preceding decisions are
fixed and all quantities computed so far are real-analytic. If the next
decision compares an analytic scalar $h(P)$ with thresholds $T$, define
\begin{align*}
S_U\coloneqq
\!\!\!\!\!\!\!\bigcup_{\substack{c\in T\\h-c\not\equiv0\text{ on }U}}
\!\!\!\!\!\!\!\{P\in U:h(P)=c\}.
\end{align*}
For each $c$ in this union, $h-c$ is a nonzero real-analytic function, so
$\{h=c\}$ has Lebesgue measure zero \citep{mityagin2020}. Since $T$ is locally finite,
$S_U$ is also null. If $h-c\equiv0$ on $U$, the deterministic tie rule
makes the same choice throughout $U$. Hence, the next decision is fixed on
each connected component of $U\setminus S_U$. These components are open
and countable.

The induction gives the claimed partition provided the intervening
operations are defined and analytic on regions covering the input
space up to a null set. We verify this condition for \our.

Polynomial operations, sine and cosine are real analytic.
Square roots and reciprocals are analytic on their positive and
nonzero domains, respectively.
For an analytic vector $v(P)$ on a connected region $U$, either
$v\equiv0$, in which case its norm is constant, or
$\|v(P)\|_2^2$ is a nonzero analytic function.
In the latter case, its zero set is null; away from this set,
$\|v(P)\|_2=\sqrt{\|v(P)\|_2^2}$ is analytic.

Pole normalization is defined outside $E_0$.
For a unit pole direction $u$, the coordinate helper $e$ is chosen
using the smallest absolute coordinate of $u$, giving
$\|e\times u\|_2^2\geq2/3$.
A selected planar reference has nonzero projection; if no reference
is selected, the planar transformation is the identity.
Angular coordinates are analytic on local charts away from the
origin. Refining these charts by coordinate signs uses the same
scalar-comparison argument above.

These refinements discard only null sets and leave countably many
open regions on which the geometric updates are analytic.
Identically zero quantities are retained as constant functions.

The scale $R(P)=\max_i\|\xb_i\|_2$ is computed by comparing
the polynomial quantities $\|\xb_i\|_2^2$.
Refine the preceding regions using these comparisons.
On each resulting region, a maximizing index $i_\star$ is fixed.
Outside $E_0$, the function
$R(P)=\sqrt{\|\xb_{i_\star}\|_2^2}$ is positive and real analytic.
Consequently, $t(P)/R(P)$ is real analytic whenever $t(P)$ is
real analytic on that region. Identically tied maxima are handled
by a fixed deterministic choice of index.

For $\eta>0$, define
$Q_\eta(s)=\lfloor s/\eta+\frac12+\beta\rfloor$,
where $\beta=(\sqrt5-2)/2$.
The implementation bins each spatial scalar $t(P)$, including
radii, heights, and Cartesian coordinates, as
\begin{align*}
Q_\eta\!\left(\frac{t(P)}{R(P)}\right)
=
\left\lfloor
\frac{t(P)}{\eta R(P)}+\frac12+\beta
\right\rfloor,
\end{align*}
while each angular quantity $\theta(P)$ is binned as
$Q_\eta(\theta(P))$.
The same full-cloud scale $R(P)$ is retained throughout
the three-dimensional construction.

Thus each binning decision applies $Q_\eta$ to an analytic
scalar $h(P)$, either $t(P)/R(P)$ or $\theta(P)$.
Its bin can change only when
\begin{align*}
h(P)=\left(k+\frac12-\beta\right)\eta,
\qquad k\in\mathbb Z.
\end{align*}
These thresholds are fixed and locally finite.
Once the bins are fixed, comparisons of their integer keys
are fixed as well. The remaining exact comparisons use
analytic differences with threshold zero.
For $\eta=0$, quantization is omitted.

Thus, the preceding argument
applies to \our and yields connected open sets $\{U_j\}_{j\geq1}$ and
real-analytic maps $F_j(\cdot)$ such that
\begin{align*}
\mu\left(
\mathcal X_{\ab}\setminus\bigcup_{j\geq1}U_j
\right)=0,\,
\left.\widehat{\mathcal C}_{3,\eta}^\kappa(\cdot)\right|_{U_j}=F_j(\cdot).
\end{align*}

Let $E_\eta\coloneqq
\mathcal X_{\ab}\setminus\bigcup_{j\geq1}U_j$. Fix
$P_0\notin E_\eta$ and choose $U_j$ containing $P_0$. Since $U_j$ is open,
there exists $\delta_\eta(P_0)>0$ such that $K\coloneqq\overline{\mathcal B_\infty(P_0,\delta_\eta(P_0))}\subset U_j.$ For $V=(v_1,\ldots,v_n)$, define
$\|V\|_{\infty}\coloneqq\max_i\|v_i\|_2$.
Using its induced operator norm, since $F_j(\cdot)$ is real-analytic and $K$ is compact, we define
\begin{align*}
L_\eta(P_0)
\coloneqq
\sup_{Q\in K}
\|DF_j(Q)\|_{\infty}
<\infty.
\end{align*} 
The mean-value inequality gives, for every $P,P'\in K$,
\begin{align*}
\max_{1\leq i\leq n}
\|F_j(P)_i-F_j(P')_i\|
\leq
L_\eta(P_0)d_\infty(P,P').
\end{align*}
The row order and labels are fixed on $U_j$, so pairing the corresponding rows
is an admissible EMD matching. Therefore,
\begin{align*}
\mathrm{EMD}\left(
\widehat{\mathcal C}_{3,\eta}^\kappa(P),
\widehat{\mathcal C}_{3,\eta}^\kappa(P')
\right) \leq \frac1n\sum_{i=1}^n
\|F_j(P)_i-F_j(P')_i\|\leq L_\eta(P_0)d_\infty(P,P').
\end{align*}
Thus the local Lipschitz bound holds outside the null set $E_\eta$.
Setting $P'=P_0$ proves continuity almost everywhere.
\end{proof}

\section{Additional Experimental Results and Details}

\paragraph{Licenses of datasets and baseline code.}
QM9 and QM-sym are released under the CC0 1.0 Universal
Public Domain Dedication, as specified in their original
dataset records.\footnote{\url{https://doi.org/10.6084/m9.figshare.978904_D12}}
\footnote{\url{https://doi.org/10.6084/m9.figshare.9638093}}
ModelNet40 is provided for academic research only, with
copyright over the CAD models retained by their original
creators.\footnote{\url{https://modelnet.cs.princeton.edu/download.html}}
The ASUN implementation is publicly available in the authors'
repository accompanying their ICML 2024 paper, yet we did not
identify an explicit software license in that
repository.\footnote{\url{https://github.com/Utah-Math-Data-Science/alignment}}
\label{app:expt}
\subsection{QM9 Property Prediction Experimental Details}
\label{app:transformer_architecture}
In the transformer model, each molecule is represented as a sequence of atom tokens. For atom $i$, its atomic-number embedding is added to an embedding of its Cartesian coordinates:
\begin{equation*}
    \mathbf{h}_i=\operatorname{Emb}(z_i)+ \operatorname{MLP}_{\mathrm{coord}}(\mathbf{x}_i)~,\qquad \mathbf{x}_i \in \mathbb{R}^{3}~.
\end{equation*}
A learned \texttt{[CLS]} token is prepended to the sequence, and padding atoms
are excluded using an attention mask. The tokens are processed by a standard transformer encoder without positional encodings, making the architecture invariant to permutations of the atom order. The encoded \texttt{[CLS]} token is passed to an MLP that predicts the molecular property. The model is not intrinsically rotation invariant or equivariant; instead, the canonicalized coordinates provide a consistent reference frame.

Table~\ref{tab:model_hyperparameters} shows the final selected hyperparameters, all architectural and optimization settings are held fixed when comparing regular, ASUN-canonicalized, and our canonicalized coordinates. After selecting the final configurations, models are evaluated using three random seeds.

Each molecule is given one random rotation (or reflection), atom permutation and translation, fixed across epochs. ASUN and \our{} canonicalize this randomly posed copy in training, validation and test, so they cannot benefit from the stored QM9 coordinates, which share a common, non-random orientation. The regular representation is trained and validated on the stored coordinates and tested on the randomly posed molecules, as in the $z/\SO(3)$ setting of ModelNet40. This is because QM9 molecules share a common orientation \citep{not-aligned}, so random rotations are applied at test time to measure the impact of canonicalization versus non-invariance on this dataset. Testing the canonicalized models on a second, independent random pose of every test molecule leaves their error unchanged up to $10^{-3}$ (seed 0: ASUN $0.428$ and $0.429$, \our{} $0.406$ on both poses).

\begin{table}[htb!]
\centering
\caption{Final architecture and training hyperparameters selected for the Transformer model.}
\label{tab:model_hyperparameters}
\begin{tabular}{ll}
\toprule
Hyperparameter & Transformer \\
\midrule

Model dimension
    & $128$ \\
Transformer layers
    & $6$ \\
Attention heads
    & $8$ \\
Feed-forward dimension
    & $512$ \\
Encoder activation
    & GELU \\
Coordinate-MLP activation
    & SiLU\\
Trainable parameters
    & $1,224,961$ \\
\midrule
Train/val/test split
    & $100,000$/$10,000$/$20,831$\\
Batch size
    & $256$\\
Maximum epochs
    & $200$\\
Optimizer
    & AdamW\\
Learning rate
    & $3\times10^{-4}$ \\
Weight decay
    & $10^{-5}$\\
Early-stopping patience
    & $40$ epochs \\
Learning-rate scheduler
    & ReduceLROnPlateau \\
Scheduler factor/patience
    & $0.5$/$5$ epochs \\
\bottomrule
\end{tabular}
\end{table}

\subsection{ModelNet40 Classification Experimental Details}
\label{app:modelnet_architecture}
Each ModelNet40 shape is represented by $n\in\{200,512,1024\}$ points sampled uniformly (area-weighted) from its mesh surface, centered and scaled so that $\max_i\|\mathbf{x}_i\|_\infty=1$. Every model operates on the $k$-nearest-neighbor graph ($k=20$, no self-loops) of the posed points, recomputed for each batch, so that edges always refer to the posed coordinates. We use two families of models, following \citet{baker2024explicit}. The non-invariant models take the coordinates as node features, so their output depends on the pose: GCN \citep{kipf2017semisupervisedclassificationgraphconvolutional} and GCNII \citep{chen2020simpledeepgraphconvolutional} ($\alpha=0.1$, $\theta=0.5$) with coordinate inputs and batch normalization, and SchNet-Pos and EGNN-Pos, which add a linear embedding of the coordinates to the node features of SchNet and EGNN. The invariant controls see only relative geometry: SchNet \citep{schutt2017schnet} (interaction blocks on edge lengths expanded in $50$ Gaussians with cutoff $2.0$, from a shared learned node embedding) and EGNN \citep{satorras2021en} (message passing on squared edge lengths without coordinate updates, nodes initialized from their distance to the centroid). All models end with a readout that concatenates mean and max pooling over the nodes, followed by a two-layer MLP classifier over the $40$ classes.

All models and representations share one configuration (Table~\ref{tab:modelnet_hyperparameters}); no hyperparameter is tuned per representation. Training uses a fixed number of epochs, and test accuracy is reported at the epoch with the best validation accuracy. We report the mean and standard deviation over three seeds at every $n$. In the $z/\SO(3)$ setting, models are trained on the upright shapes and tested on randomly rotated ones; in all other settings, each shape is given one fixed random rotation (and point permutation), and canonical representations canonicalize that rotated copy.

\begin{table}[htb!]
\centering
\caption{Architecture and training hyperparameters for the ModelNet40 models, shared by every model and representation.}
\label{tab:modelnet_hyperparameters}
\begin{tabular}{ll}
\toprule
Hyperparameter & Value \\
\midrule
Hidden dimension & $16$  \\
Layers & $4$ \\
Dropout & $0.5$  \\
$k$-NN graph & $k=20$ \\
Readout & concatenated mean and max pooling, then a two-layer MLP \\
Trainable parameters & GCN $2{,}216$; GCNII $2{,}424$; \\
 & SchNet $8{,}776$; SchNet-Pos $8{,}840$; \\
 & EGNN $7{,}832$; EGNN-Pos $7{,}864$ \\
\midrule
Train/val/test split & $8{,}859$/$984$/$2{,}468$ \\
Batch size & $128$ \\
Epochs & $60$ (no early stopping) \\
Optimizer & Adam \\
Learning rate & $10^{-2}$  \\
Weight decay & $5\times10^{-4}$  \\
Learning-rate scheduler & none \\
Loss & cross-entropy \\
Model selection & best validation accuracy \\
\bottomrule
\end{tabular}
\end{table}

\newpage
\subsection{Additional Results on ModelNet40 Shape Prediction}
The ModelNet40 dataset consists of $40$ categorized CAD meshes, with 9,843 training and 2,468 testing samples. We sample $n\in\{200,512,1024\}$ points from the mesh and use the sampled points to predict properties of the point cloud. We train state-of-the-art invariant/equivariant SchNet \citep{schutt2017schnet} and EGNN \citep{satorras2021en} architectures. Upon adding positional encodings, the invariance breaks, which we denote as SchNet-Pos and EGNN-Pos. We also train non-invariant GCN and GCNII. The main paper reports the accuracy for $n=512$ sampled points, and here we do the same for $n=200, 1024$ in Table~\ref{tab:results-n200-n1024}.

\begin{table}[H]
\centering
\resizebox{\textwidth}{!}{%
\begin{tabular}{@{}l cccc cccc@{}}
\toprule
& \multicolumn{4}{c}{$n=200$} & \multicolumn{4}{c}{$n=1{,}024$} \\
\cmidrule(lr){2-5} \cmidrule(lr){6-9}
Model & $z/\SO(3)$ & $\SO(3)/\SO(3)$ & ASUN & \our & $z/\SO(3)$ & $\SO(3)/\SO(3)$ & ASUN & \our \\
\midrule
SchNet & $27.6\pm0.8$ & $\bm{28.2\pm0.8}$ & $28.1\pm0.4$ & $27.8\pm1.2$ & $27.6\pm0.2$ & $\bm{28.2\pm1.1}$ & $27.9\pm0.1$ & $27.5\pm1.6$ \\
EGNN & $33.9\pm1.7$ & $33.3\pm0.8$ & $\bm{36.3\pm0.1}$ & $32.1\pm1.1$ & $25.7\pm5.3$ & $\bm{30.8\pm2.9}$ & $27.0\pm2.5$ & $28.9\pm2.1$ \\
\midrule
GCN & $7.4\pm0.3$ & $14.7\pm0.5$ & $16.6\pm1.5$ & $\bm{25.3\pm1.3}$ & $8.9\pm0.6$ & $18.9\pm0.6$ & $20.7\pm1.4$ & $\bm{27.1\pm1.4}$ \\
GCNII & $7.7\pm0.9$ & $21.1\pm1.1$ & $24.4\pm0.4$ & $\bm{29.5\pm2.1}$ & $7.2\pm1.8$ & $20.8\pm1.1$ & $25.2\pm0.3$ & $\bm{27.3\pm2.8}$ \\
SchNet-Pos & $10.7\pm1.0$ & $28.6\pm0.4$ & $29.9\pm0.6$ & $\bm{33.1\pm1.8}$ & $10.2\pm0.7$ & $27.7\pm1.6$ & $31.3\pm1.7$ & $\bm{33.5\pm3.1}$ \\
EGNN-Pos & $9.8\pm0.6$ & $28.9\pm3.1$ & $30.3\pm1.0$ & $\bm{35.3\pm2.9}$ & $10.0\pm1.0$ & $26.8\pm0.6$ & $30.1\pm2.6$ & $\bm{34.6\pm1.9}$ \\
\bottomrule
\end{tabular}%
}
\caption{Accuracy of classification on ModelNet40 for $n=200$ and $n=1{,}024$ (\%, mean $\pm$ STD over three seeds). Best results are in bold.}
\label{tab:results-n200-n1024}
\end{table}

As observed from the table, \textbf{(1)} in the shape classification task, \our gives better performance than ASUN for $4/4$ non-invariant architectures for both the mesh of 200 and 1024 points, \textbf{(2)} the invariant SchNet and EGNN do not benefit from canonicalization: SchNet stays within one point across representations, and EGNN is best with ASUN at $n=200$ and without canonicalization at $n=1{,}024$. Intuitively, \our computes a \textit{smoother} canonicalization map than ASUN as observed from the other experiments, which can be used to explain the superior empirical performance.

\subsection{Optimal Numerical Tolerance}
\label{app:tolerance-ASUN-PCA}
All the canonicalization algorithms deal with $\mathrm{O}(3)$ invariance here. Each one of the baseline algorithms (PCA and ASUN), as well as \our, implement a tolerance hyperparameter in a different manner. For PCA, the tolerance does not have any effect against its eigenvalue degeneracy, which is its main failure point. For ASUN, it has two main different effects: it sets the minimum sensitivity for interatomic distances (which is the dominant effect and the one that governs the varying behavior of the algorithm at different tolerances) and it sets a threshold for angular degeneracy (which is the reason why a higher tolerance fails at more configurations). \our, consistently and at every step of the algorithm, treats as equal two lengths within $\eta R$ of each other ($R$ the largest distance from the centroid) and two angles within $\eta$ radians, so that one tolerance means the same on clouds of any size.

To run a fair set of experiments, we compare every algorithm against the 18 adversarial point clouds under a sweep of tolerances, recording the flip rate (amplification $>10$) at perturbations from $10^{-7}$ to $10^{-2}$ and 12 random rotations for each. The results can be seen in Table~\ref{tab:tol-sweep-zoo}. PCA has no tolerance, so its entries are the same in every row. ASUN raises errors on point clouds for tolerances $10^{-1}$ and $10^{-2}$; we choose a tolerance of $10^{-2}$ as it has a good trade-off between flip rate and an acceptable number of failures. \our has a decreasing flip rate as \texttt{tol}$\uparrow1$, however that hurts runtime; it stays invariant on every QM-sym molecule at the $0.1$~\AA{} threshold at every \texttt{tol}. Thus, to achieve a trade-off, we choose a tolerance of $10^{-3}$ for most of the experiments. \label{app:just-tolerance}

\begin{table}[H]
\centering
\caption{Tolerance sweep across the $18$ labeled point clouds. Time is the mean per-cloud canonicalization time in ms (median over 10 calls per cloud). Flip is the fraction of perturbations whose canonical form moved more than $10\times$ further than the input did, pooled over six noise magnitudes ($\sigma = 10^{-7},\ldots,10^{-2}$ of the cloud's RMS radius) and twelve draws per cloud. Non-inv.\ is the fraction of clouds whose random rotations give different canonical forms. Per method and column, bold marks the best \texttt{tol} and underline the second best (times compared to two significant figures); $*$ marks a \texttt{tol} at which the method fails to return a canonicalization for some inputs. \our{} ranks labels rarest first. PCA has no tolerance (one setting, repeated in every row). Best results are in bold.}
\label{tab:tol-sweep-zoo}
\begin{tabular}{@{}l rrr rrr rrr@{}}
\toprule
& \multicolumn{3}{c}{PCA} & \multicolumn{3}{c}{ASUN} & \multicolumn{3}{c}{\our} \\
\cmidrule(lr){2-4} \cmidrule(lr){5-7} \cmidrule(lr){8-10}
\texttt{tol} & Time & Flip & Non-inv. & Time & Flip & Non-inv. & Time & Flip & Non-inv. \\
\midrule
$10^{-9}$ & $\mathbf{0.184}$ & $\mathbf{0.762}$ & $\mathbf{1.000}$ & $\mathbf{5.838}$ & $0.418^{*}$ & $\mathbf{0.353}$ & $\mathbf{2.051}$ & $0.623$ & $\mathbf{0.000}$ \\
$10^{-8}$ & $\mathbf{0.184}$ & $\mathbf{0.762}$ & $\mathbf{1.000}$ & $\underline{5.929}$ & $0.418^{*}$ & $\mathbf{0.353}$ & $\mathbf{2.096}$ & $0.623$ & $\mathbf{0.000}$ \\
$10^{-7}$ & $\mathbf{0.184}$ & $\mathbf{0.762}$ & $\mathbf{1.000}$ & $5.982$ & $0.418^{*}$ & $\mathbf{0.353}$ & $\mathbf{2.082}$ & $0.600$ & $\mathbf{0.000}$ \\
$10^{-6}$ & $\mathbf{0.184}$ & $\mathbf{0.762}$ & $\mathbf{1.000}$ & $6.001$ & $0.408^{*}$ & $\mathbf{0.353}$ & $\mathbf{2.084}$ & $0.503$ & $\mathbf{0.000}$ \\
$10^{-5}$ & $\mathbf{0.184}$ & $\mathbf{0.762}$ & $\mathbf{1.000}$ & $5.987$ & $0.393^{*}$ & $\mathbf{0.353}$ & $\mathbf{2.094}$ & $0.404$ & $\mathbf{0.000}$ \\
$10^{-4}$ & $\mathbf{0.184}$ & $\mathbf{0.762}$ & $\mathbf{1.000}$ & $\underline{5.886}$ & $0.375^{*}$ & $\mathbf{0.353}$ & $\mathbf{2.088}$ & $0.297$ & $\mathbf{0.000}$ \\
$10^{-3}$ & $\mathbf{0.184}$ & $\mathbf{0.762}$ & $\mathbf{1.000}$ & $\underline{5.890}$ & $0.358^{*}$ & $\mathbf{0.353}$ & $\mathbf{2.058}$ & $0.196$ & $\mathbf{0.000}$ \\
$10^{-2}$ & $\mathbf{0.184}$ & $\mathbf{0.762}$ & $\mathbf{1.000}$ & $\underline{5.930}$ & $\underline{0.356}^{*}$ & $\mathbf{0.353}$ & $\mathbf{2.066}$ & $\underline{0.103}$ & $\mathbf{0.000}$ \\
$10^{-1}$ & $\mathbf{0.184}$ & $\mathbf{0.762}$ & $\mathbf{1.000}$ & $\underline{5.916}$ & $\mathbf{0.345}^{*}$ & $\mathbf{0.353}$ & $\underline{2.246}$ & $\mathbf{0.001}$ & $\mathbf{0.000}$ \\
\bottomrule
\end{tabular}%

\end{table}

We also do similar tolerance sweeps for the symmetric molecule dataset QM-sym, the results of which we show below in Table~\ref{tab:tol-sweep-qmsym}.

\begin{table}[H]
\centering
\caption{Tolerance sweep across $1{,}000$ QM-sym molecules Time is the mean per-cloud canonicalization time in ms (median over 10 calls per cloud). Flip is the fraction of perturbations whose canonical form moved more than $10\times$ further than the input did, pooled over six noise magnitudes ($\sigma = 10^{-7},\ldots,10^{-2}$ of the cloud's RMS radius) and twelve draws per cloud. Non-inv.\ is the fraction of molecules with one of $12$ random views more than $0.1$~\AA{} (EMD) from the canonical form. Per method and column, bold marks the best \texttt{tol} and underline the second best (times compared to two significant figures); $*$ marks a \texttt{tol} at which the method fails to return a canonicalization for some inputs. \our{} ranks labels rarest first. PCA has no tolerance (one setting, repeated in every row). Best results are in bold.}
\label{tab:tol-sweep-qmsym}
\begin{tabular}{@{}l rrr rrr rrr@{}}
\toprule
& \multicolumn{3}{c}{PCA} & \multicolumn{3}{c}{ASUN} & \multicolumn{3}{c}{\our} \\
\cmidrule(lr){2-4} \cmidrule(lr){5-7} \cmidrule(lr){8-10}
\texttt{tol} & Time & Flip & Non-inv. & Time & Flip & Non-inv. & Time & Flip & Non-inv. \\
\midrule
$10^{-9}$ & $\mathbf{0.206}$ & $\mathbf{0.581}$ & $\mathbf{1.000}$ & $38.6$ & $0.897$ & $\mathbf{0.276}$ & $\mathbf{1.349}$ & $0.534$ & $\mathbf{0.000}$ \\
$10^{-8}$ & $\mathbf{0.206}$ & $\mathbf{0.581}$ & $\mathbf{1.000}$ & $\underline{38.1}$ & $0.897$ & $\mathbf{0.276}$ & $\underline{1.388}$ & $0.534$ & $\mathbf{0.000}$ \\
$10^{-7}$ & $\mathbf{0.206}$ & $\mathbf{0.581}$ & $\mathbf{1.000}$ & $\underline{37.9}$ & $0.896$ & $\underline{0.297}$ & $1.690$ & $0.519$ & $\mathbf{0.000}$ \\
$10^{-6}$ & $\mathbf{0.206}$ & $\mathbf{0.581}$ & $\mathbf{1.000}$ & $\underline{37.8}$ & $0.875$ & $0.468$ & $1.795$ & $0.457$ & $\mathbf{0.000}$ \\
$10^{-5}$ & $\mathbf{0.206}$ & $\mathbf{0.581}$ & $\mathbf{1.000}$ & $\underline{37.6}$ & $0.861$ & $0.413$ & $1.802$ & $0.373$ & $\mathbf{0.000}$ \\
$10^{-4}$ & $\mathbf{0.206}$ & $\mathbf{0.581}$ & $\mathbf{1.000}$ & $\underline{38.0}$ & $0.841$ & $0.414$ & $1.818$ & $0.289$ & $\mathbf{0.000}$ \\
$10^{-3}$ & $\mathbf{0.206}$ & $\mathbf{0.581}$ & $\mathbf{1.000}$ & $\mathbf{37.4}$ & $0.817$ & $0.419$ & $1.825$ & $0.208$ & $\mathbf{0.000}$ \\
$10^{-2}$ & $\mathbf{0.206}$ & $\mathbf{0.581}$ & $\mathbf{1.000}$ & $\underline{37.6}$ & $\underline{0.782}$ & $0.456$ & $1.877$ & $\underline{0.125}$ & $\mathbf{0.000}$ \\
$10^{-1}$ & $\mathbf{0.206}$ & $\mathbf{0.581}$ & $\mathbf{1.000}$ & $\underline{37.9}$ & $\mathbf{0.693}$ & $0.593$ & $2.013$ & $\mathbf{0.036}$ & $\mathbf{0.000}$ \\
\bottomrule
\end{tabular}%
\end{table}

\subsection{Additional Results on Robustness}
\label{app:additional_results}
Here, we compare \our with ASUN and PCA on how much the canonical representations move upon random small perturbations. We take $500$ random perturbation directions and report the fraction of representations that move more than $0.1$ of the cloud's RMS radius. We take multiple point cloud geometries, some more robust than the others in terms of immunity to big jumps upon random perturbations. We sort the geometries according to their jump percentage and report the fraction of representations that flip significantly for each of PCA, ASUN, and \our in Figure~\ref{fig:geometry-census}.

\begin{figure}[htb!]
    \centering
    \includegraphics[width=0.6\textwidth]{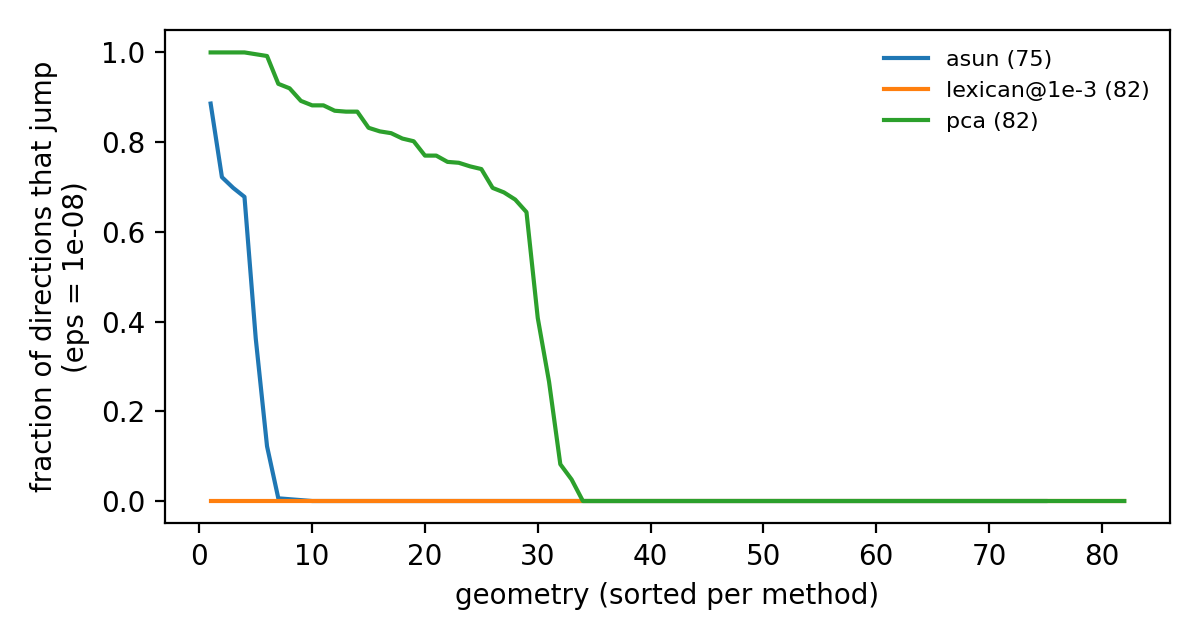}
    \caption{Each point represents the fraction of $500$ random perturbation directions, at $\varepsilon=10^{-8}$, that move the canonical form by more than $0.1$ of the cloud's RMS radius, for each geometry. The geometries are sorted descending on the x axis independently per method. ASUN is drawn over the $75$ geometries it can process; it raises an assertion on the octahedron and the six bipyramids at every tolerance.}
    \label{fig:geometry-census}
\end{figure}

At the tested perturbation magnitude, \our exhibits no
threshold-exceeding jumps across the 82 pilot geometries.
PCA and ASUN exhibit positive observed jump frequencies on
several geometries, with substantial variation across inputs.
These results support lower observed perturbation sensitivity
for \our under this protocol.

\textbf{Machine dependence of PCA.} The PCA values in Tables~\ref{tab:disc-methods} and \ref{tab:path-all} were computed on an Intel Xeon Gold 6336Y CPU with NumPy~2.5.3 (OpenBLAS~0.3.34). They may differ on other machines, since PCA's eigenbasis at repeated eigenvalues is set by floating-point rounding, which depends on the processor's linear-algebra kernels (for example, $5{,}680$ instead of $5{,}301$ discontinuities on the $\mathrm{CH}_4$ stretch on an Intel Xeon 6710E). The conclusions drawn from these tables are unchanged.

\subsection{Perturbation Across Symmetric Polyhedra}
\label{app:symmetric-polyhedra}
In Table~\ref{tab:PCA-ASUN-Lexi-flip}, we consider the effect of adding isotropic Gaussian noise at $\sigma$ from $10^{-2}$ to $10^{-7}$ to a set of polyhedra described in Section~\ref{app:symmetric-polyhedra-data}.

We quantify our results by considering the `flip rate' and the failure to be invariant (the canonicalization is not unique, measured by considering the distance between canonicalized inputs at some tolerance). 

Our algorithm shows $0$ clouds with invariance errors (the `Not invariant' column in the table below) against $18/18$ and $6/17$ for baselines, and the $\mathrm{p}95$ amplification rate of the worse version of \our is $30$ times smaller than the best baseline. ASUN raises an error on one of the clouds (see Appendix~\ref{app:tolerance-ASUN-PCA}), which is why ASUN is compared against one less example. ASUN is unable to return a frame for certain configurations, as it requires three non-perpendicular directions. In this setup, it fails against a regular octahedron with its two axial vertices labeled differently from its four equatorial ones.

\begin{table}[H]
\centering
\caption{Algorithms applied to the $18$ labeled point clouds spanning polyhedra, molecules, and adversarial cases (see Figure~\ref{fig:zoo-gallery}). Flip rate is the fraction of perturbations whose canonical form varied $10\times$ further than the input moved, $\alpha$ is the output movement divided by the input movement (1 means no amplification), and not invariant counts the clouds whose random rotations give different canonical forms, out of those the algorithm canonicalizes. Best results are in bold.}
\label{tab:PCA-ASUN-Lexi-flip}

\begin{tabular}{lcccc}
\toprule
& Flip rate & Median $\alpha$ & p95 $\alpha$ & Not invariant \\
\midrule
PCA & $0.762$ & $1.4{\times}10^{3}$ & $7.5{\times}10^{6}$ & $18/18$ \\
ASUN @ $10^{-2}$ & $0.356$ & $2.14$ & $1.7{\times}10^{6}$ & $6/17$ \\
\our{} @ $10^{-3}$ & $\mathbf{0.196}$ & $\mathbf{1.25}$ & $\mathbf{516}$ & $\mathbf{0/18}$ \\
\our{} @ $10^{-5}$ & $0.404$ & $1.66$ & $5.4{\times}10^{4}$ & $\mathbf{0/18}$ \\
\bottomrule
\end{tabular}%
\end{table}

\subsection{Symmetric Polyhedra Data for Perturbation Experiments}
\label{app:symmetric-polyhedra-data}

We handcraft a set of point clouds for the perturbation experiments. We aim for a computationally manageable sample that would carry a set of challenging configurations, which nonetheless could be representative across domains. It is comprised of:
\begin{itemize}
    \item \textbf{Molecules.} A set of four molecules: NH$_{3}$, CH$_{4}$, H$_{2}$O$_{2}$, and C$_6$H$_6$.
    \item \textbf{Symmetric solids.} Nine polyhedra with structured labels: a tetrahedron and an icosahedron (one marked vertex each); a cube split into two inscribed tetrahedra; a cube with only one species; an octahedron divided into axial and equatorial sites; a hexagonal frustum and antiprism, each split into two rings; a hexagonal bipyramid with marked apices; and a cube-octahedron compound.
    \item \textbf{Unstructured clouds.} Two completely asymmetric configurations: nine isotropic Gaussian points (two species) and twelve random spherical directions (three species).
    \item \textbf{Adversarial cases.} Three configurations designed to defeat specific tie-breaking rules. In a cuboctahedron and a cube-octahedron compound, species are assigned by vertex index to break structural symmetry. The third case is a spherical shell with one point with a different label at one-tenth the outer radius. All three of these were meant to test the limits of \our; the first two have full symmetries that cannot be easily broken by distance, angle or label, while the spherical shell forces a complete tie at the second step of the algorithm (after setting the $z$ axis).
\end{itemize}

\begin{figure}[H]
    \centering
    \includegraphics[width=\textwidth]{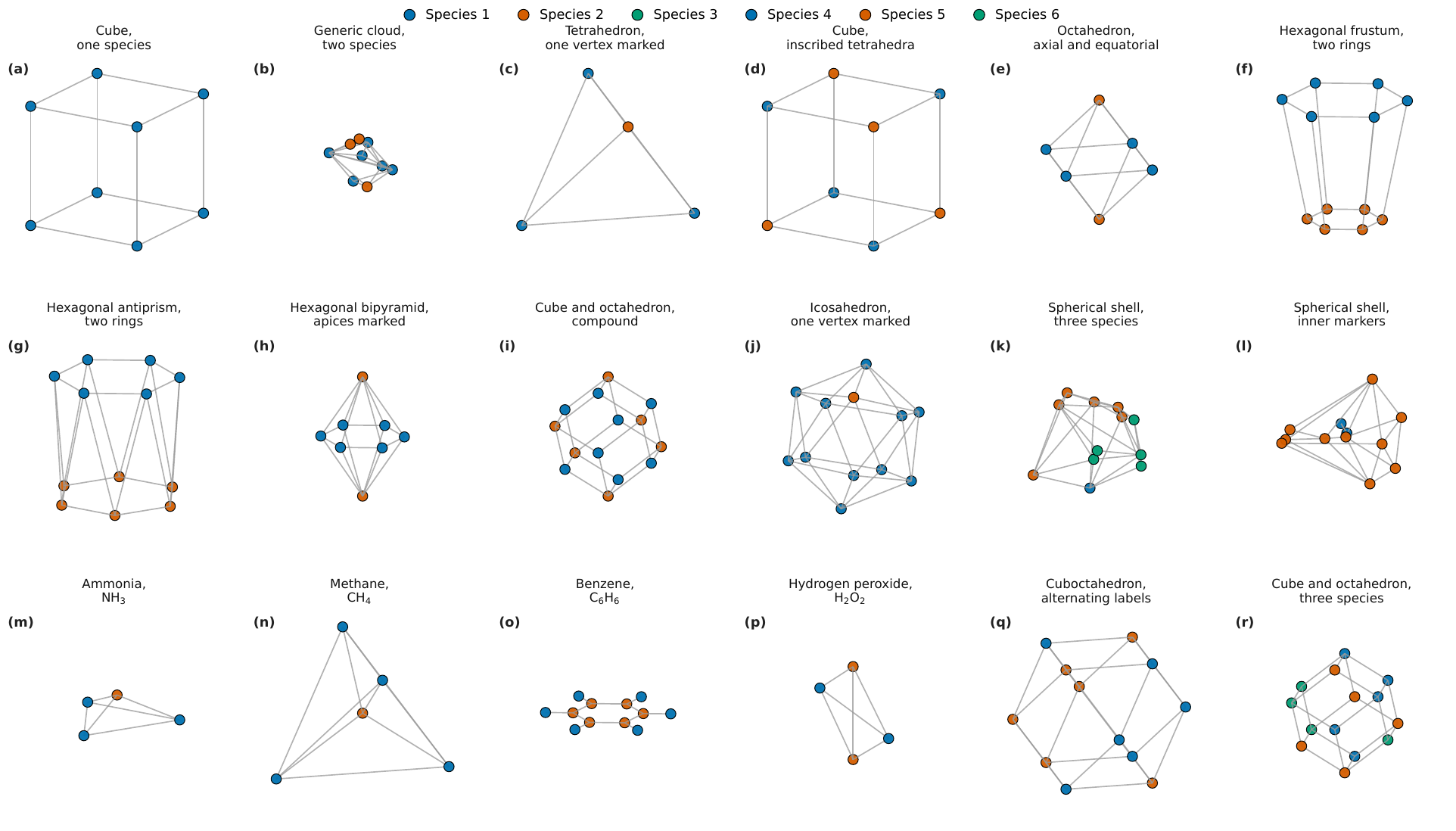}
    \caption{The $18$ labeled point clouds used in the perturbation study. The set represents a mix of four molecules, symmetric polyhedra, unstructured clouds and three adversarial cases. Color indicates species, while edges are drawn only to make
    the shape legible and carry no information beyond the points themselves.
    Each cloud is scaled independently, so relative size across panels is not
    meaningful.}
    \label{fig:zoo-gallery}
\end{figure}

\subsection {Tolerance Threshold Dependence of \our in End-to-end Learning}

In the main experimental section, we have compared \our with baselines across a broad range of tasks testing speed, invariance, robustness and performance in end-to-end learning tasks. However, we had chosen the tolerance threshold of $10^{-3}$ for those tasks. As seen in Tables~\ref{tab:tol-sweep-zoo} and \ref{tab:tol-sweep-qmsym}, having a lower threshold ensures invariance (thus, an accurate canonicalization) and runtime efficiency, but might hurt robustness. 

However, it is unclear how that would impact end-to-end learning performance. Thus in this section, we demonstrate the performance of \our across two thresholds: the chosen threshold of $10^{-3}$ and a lower threshold of $10^{-5}$. We show the performance variation of \our across the end-to-end learning tasks in the following Tables~\ref{tab:qm9_asun_comparison_app} and \ref{tab:results-n512_app}:

\begin{table}[H]
\centering
\setlength{\tabcolsep}{2pt}
\caption{QM9 isotropic polarizability prediction (Transformer). Test MAE and RMSE over three seeds ($\text{mean} \pm \text{STD}$), on randomly rotated, permuted and translated test molecules, as in Table~\ref{tab:qm9_asun_comparison}. Regular is trained on the stored QM9 poses.}
\label{tab:qm9_asun_comparison_app}

\begin{tabular}{@{}lcc@{}}
\toprule
Representation & MAE & RMSE \\
\midrule
Regular & $0.81{\pm}0.07$ & $1.22{\pm}0.09$ \\
ASUN & $0.46{\pm}0.04$ & $0.70{\pm}0.04$ \\
\our$_{@10^{-3}}$ & $0.40{\pm}0.04$ & $0.60{\pm}0.06$ \\
\our$_{@10^{-5}}$ & $\mathbf{0.38{\pm}0.02}$ & $\mathbf{0.59{\pm}0.03}$ \\
\bottomrule
\end{tabular}
\end{table}

\begin{table}[H]
\centering
\caption{Accuracy of classification (\%, mean $\pm$ STD over three seeds). Best results are in bold.}
\label{tab:results-n512_app}

\begin{tabular}{@{}lccccc@{}}
\multicolumn{6}{c}{ModelNet40 ($n=512$)} \\
\cmidrule(lr){1-6}
Model & $z/\SO(3)$ & $\SO(3)/\SO(3)$ & ASUN & \our$_{@10^{-3}}$ & \our$_{@10^{-5}}$ \\
\midrule
SchNet & $27.8\pm0.5$ & $\bm{28.7\pm0.3}$ & $28.6\pm0.7$ & $28.1\pm0.1$ & $28.7\pm1.1$ \\
EGNN & $\bm{34.9\pm0.7}$ & $31.4\pm4.9$ & $32.2\pm0.8$ & $33.4\pm0.1$ & $33.7\pm4.0$ \\
\midrule
GCN & $8.2\pm0.6$ & $15.2\pm0.2$ & $17.1\pm2.5$ & $\bm{28.6\pm1.3}$ & $27.9\pm1.7$ \\
GCNII & $6.9\pm0.8$ & $19.1\pm1.1$ & $21.9\pm1.3$ & $\bm{29.1\pm2.7}$ & $27.8\pm0.4$ \\
SchNet-Pos & $10.5\pm1.1$ & $29.2\pm2.3$ & $30.0\pm0.8$ & $\bm{37.1\pm3.2}$ & $31.8\pm0.4$ \\
EGNN-Pos & $10.3\pm1.4$ & $31.5\pm2.3$ & $31.8\pm1.5$ & $37.0\pm0.5$ & $\bm{37.3\pm2.6}$ \\
\bottomrule
\end{tabular}%
\end{table}

\end{document}